\renewcommand{\algocf@captiontext}[2]{#1\algocf@typo. \AlCapFnt{}#2} % text of caption
\def\@algocf@capt@plain{top}
\renewcommand{\algocf@makecaption}[2]{%
	\addtolength{\hsize}{\algomargin}%
	\sbox\@tempboxa{\algocf@captiontext{#1}{#2}}%
	\ifdim\wd\@tempboxa >\hsize%     % if caption is longer than a line
	\hskip .5\algomargin%
	\parbox[t]{\hsize}{\algocf@captiontext{#1}{#2}}% then caption is not centered
	\else%
	\global\@minipagefalse%
	\hbox to\hsize{\box\@tempboxa}% else caption is centered
	\fi%
	\addtolength{\hsize}{-\algomargin}%
}
\newcommand{\inftt}{{2,\infty}}
\newcommand{\pr}{{\rm pr}}
\newcommand{\ep}{E}
\newcommand{\calN}{{\cal N}}
\newcommand{\ti}{{\tilde{i}}}
\newcommand{\tip}{{\tilde{i'}}}
\newcommand{\calF}{{\cal F}}
\newcommand{\theerrorrate}{{(n^{-1}\log n)^{1/2}}}
\newcommand{\sumiprime}{\sum_{i'\in \calN_i}}
\newcommand{\tc}{\tilde{C}}
\newcommand{\tblue}[1]{{\textcolor{black}{#1}}}
\def\twoImages#1#2#3#4#5
\def\fourImages#1#2#3#4#5#6#7#8#9
\def\fourImagesWithScale#1#2#3#4#5#6#7#8#9
\def\fourImagesOnlyWithScale#1#2#3#4#5
\begin{document}

\jname{\textcolor{red}{Submitted to Biometrika}}
%% The year, volume, and number are determined on publication
\jyear{}
\jvol{}
\jnum{}
%% The \doi{...} and \accessdate commands are used by the production team
%\doi{}
%\accessdate{Advance Access publication on 00 00 0000}
%\copyrightinfo{\Copyright\ 0000 Biometrika Trust\goodbreak {\em Printed in Great Britain}}
\copyrightinfo{}
%% These dates are usually set by the production team
%\received{00 0000}
%\revised{00 0000}
%\accessdate{Advance Access publication on 00 000 0000}

%% The left and right page headers are defined here:
\markboth{Zhang et al.}{Network edge probabilities by neighborhood smoothing}

%% Here are the title, author names and addresses
\title{\vspace{-2em}Estimating network edge probabilities by neighborhood smoothing}

\author{Yuan Zhang}
\affil{Department of Statistics, Ohio State University\\404 Cockins Hall, 1958 Neil Avenue, Columbus, Ohio, U.S.A\\ \email{yzhanghf@stat.osu.edu}}
\author{Elizaveta Levina \and Ji Zhu}
\affil{Department of Statistics, University of Michigan\\ 311 West Hall, 1085 South University Avenue, Ann Arbor, Michigan, U.S.A.\\ \email{elevina@umich.edu}, \email{jizhu@umich.edu}}

\maketitle

\begin{abstract}

	The estimation of probabilities of network edges from the observed adjacency matrix has important applications to predicting missing links and network denoising.  It has usually been addressed by estimating the graphon, a function that determines the matrix of edge probabilities, but this is ill-defined without strong assumptions on the network structure.    Here we propose a novel computationally efficient method, based on neighborhood smoothing to estimate the expectation of the adjacency matrix directly, without making the structural assumptions that graphon estimation requires.  The neighborhood smoothing method requires little tuning, has a competitive mean-squared error rate, and outperforms many benchmark methods on link prediction in simulated and real networks.   
\end{abstract}

\begin{keywords}
Graphon estimation; network analysis; nonparametric statistics.
\end{keywords}

\section{Introduction}\label{section::introduction}

Statistical network analysis spans a wide range of disciplines, including network science, statistics, physics, computer science and sociology, and an equally wide range of applications and analysis tasks such as community detection and link prediction. In this paper, we study the problem of inferring the generative mechanism of an undirected network based on a single realization of the network. The data consist of the network adjacency matrix $A\in\{0,1\}^{n\times n}$, where $n$ is the number of nodes, and $A_{ij}=A_{ji}=1$ if there is an edge between nodes $i$ and $j$.  We assume the observed adjacency matrix $A$ is generated from an underlying probability matrix $P$, so that for $i\leq j$, $A_{ij}$'s are independent Bernoulli$(P_{ij})$ trials, and the $P_{ij}$ are edge probabilities. 

It is impossible to estimate $P$ from a single realization of $A$ unless one assumes some form of structure in $P$.    When the network is expected to have communities, arguably the most popular assumption is that of the stochastic block model, where each node belongs to one of $K$ blocks and the probability of an edge between two nodes is determined by the block to which the nodes belong.   In this case, the $n \times n$ matrix $P$ is parametrized by the $K \times K$ matrix of within- and between-block edge probabilities, and thus it is possible to estimate $P$ from a single realization.   The main challenge in fitting the stochastic block model is in estimating the blocks themselves, and that has been the focus of the literature, see for example \citet{bickel2009nonparametric}, \citet{rohe2011spectral}, \citet{amini2013pseudo}, \citet{saade2014spectral} and \citet{guedon2014community}.
Once the blocks are estimated, $P$ can be estimated efficiently by a plug-in moment estimator.    Many extensions and alternatives to the stochastic block model have been proposed to model networks with communities, including those of \citet{hoff2008modeling}, \citet{airoldi2009mixed}, \citet{karrer2011stochastic}, \citet{cai2015robust} and Zhang et al. (arXiv:1412.3432), but their properties are generally only known under the correctly specified model with communities. Here we are interested in estimating $P$ for more  general networks.

A general representation for the matrix $P$ for unlabeled exchangeable networks goes back to \citet{aldous1981representations} and the 1979 preprint by D. N. Hoover entitled ``Relations on probability spaces and arrays of random variables''. Formally, a network is exchangeable if for any permutation $\pi$ of the set $\{1, \dots, n\}$, the distribution of edges remains the same.    That is, if the adjacency matrix $A = [A_{ij}]$ is drawn from the probability matrix $P$, which we write as $A \sim P$, then for any permutation $\pi$,  
\begin{equation}
\left[A_{\pi(i)\pi(j)} \right]  \sim P  \ .  \label{introduction::exchangeability}
\end{equation}
\citet{aldous1981representations} and Hoover showed that an exchangeable network always admits the following Aldous--Hoover representation:
\begin{definition}
	For any network satisfying \eqref{introduction::exchangeability}, there exists a function $f:[0,1]\times[0,1]\to[0,1]$ and a set of independent and identically distributed random variables $\xi_i\sim \textrm{Uniform}[0, 1]$, such that
	\begin{equation}
	P_{ij} = f(\xi_i, \xi_j)  \ .  \label{introduction::AldousHooverrepresentation}
	\end{equation}
\end{definition}

Following the literature, we call $f$ the graphon function.  Unfortunately, as pointed out in Diaconis \& Jason (arXiv 0712.2749), $f$ in this representation is neither unique nor identifiable, since 
for any measure-preserving one-to-one transformation $\sigma:[0,1]\to[0,1]$, both $f\{\sigma(u), \sigma(v)\}$ and $f(u, v)$ yield the same distribution of $A$.    An identifiable and unique canonical representation can be defined if one requires $g(u) = \int_0^1f(u, v)\textrm{d}v$ to be non-decreasing  \citep{bickel2009nonparametric}.   \citet{chan2014consistent} show that $f$ and $\xi_i$'s are jointly identifiable when $g(u)$, which can be interpreted as expected node degree, is strictly monotone.  This assumption is strong and excludes the stochastic block model.

In practice, the main purpose of estimating $f$ is to estimate $P$, and thus identifiability of $f$ or lack thereof may not matter if $P$ itself can be estimated.  The preprint by Hoover and  Diaconis \& Jason (arXiv 0712.2749) showed that the measure-preserving map $\sigma$ is the only source of non-identifiability.  Wolfe and Olhede (arXiv:1309.5936) and \citet{choi2014co}  proposed estimating $f$ up to a measure-preserving transformation $\sigma$ via step-function approximations based on fitting the stochastic block model with a larger number of blocks $K$.   This approximation does not assume that the network itself  follows the block model, and some theoretical guarantees have been obtained under more general models.    In related work, \citet{olhede2014network} proposed to approximate the graphon with so-called network histograms, that is, stochastic block models with many blocks of equal size, akin to histogram bins.   Another method to compute a network histogram was proposed by \cite{amini2014semidefinite}, as an application of their semi-definite programming approach to fitting block models with equal size blocks.    Recently,  \citet{gao2014rate} established the minimax error rate for estimating $P$ and proposed a least squares type estimator to achieve this rate, which obtains the estimated probability $P$ by averaging the adjacency matrix elements within a given block partition.    A similar estimator was proposed in \citet{choi2015co}, applicable also to non-smooth graphons.  However, these methods are in principle computationally infeasible since they require an exhaustive enumeration of all possible block partitions.   Cai et al. (arXiv:1412.2129) proposed an iterative algorithm to fit a stochastic blockmodel and approximate the graphon, but its error rate is unknown for general graphons.    A Bayesian approach using block priors proposed by Gao et al. (arXiv:1506.02174) achieves the minimax error rate adaptively, but it still requires the evaluation of the posterior likelihood over all possible block partitions to obtain the posterior mode or the expectation for the probability matrix.

Other recent efforts on graphon estimation focus on the case of monotone node degrees, which make the graphon identifiable.  The sort and smooth methods of \citet{yang2014nonparametric} and \citet{chan2014consistent} estimate the graphon under this assumption by first sorting nodes by their degrees and then smoothing the matrix $A$ locally to estimate edge probabilities.   The monotone degree assumption is crucial for the success of these methods, and as we later show, the sort and smooth methods perform poorly when it does not hold.  Finally, general matrix denoising methods can be applied to this problem if one considers $A$ to be a noisy version of its expectation $P$;   a good general representative of this class of methods is the universal singular value thresholding approach of \citet{chatterjee2014matrix}.  Since this is a general method, we cannot expect its error rate to be especially competitive for this specific problem, and indeed its mean squared error rate is slower than the cubic root of the minimax rate.

In this paper, we propose a novel computationally efficient method for edge probability matrix estimation based on neighborhood smoothing, for piecewise Lipschitz graphon functions.  The key to this method is adaptive neighborhood selection, which allows us to avoid making strong assumptions about the graphon.   A node's neighborhood consists of nodes with similar rows in the adjacency matrix, which intuitively correspond to nodes with similar values of the latent node positions $\xi_i$.  To the best of our knowledge, our estimator achieves the best error rate among existing computationally feasible methods;    it allows easy parallelization.   The size of the neighborhood is controlled by a tuning parameter, similar to bandwidth in nonparametric regression;  the rate of this bandwidth parameter is determined by theory, and we show empirically that the method is robust to the choice of the constant.      Experiments on synthetic networks demonstrate that our method performs very well under a wide range of graphon models, including those of low rank and full rank, with and without monotone degrees.  We also test its performance on the link prediction problem, using both synthetic and real networks.

\section{The neighborhood smoothing estimator and its error rate}\label{section::estimation}

\subsection{Neighborhood smoothing for edge probability estimation}\label{section::estimation::estimator}
Our goal is to estimate the probabilities $P_{ij}$ from the observed network adjacency matrix $A$, where each $A_{ij}$ is independently drawn from $\textrm{Bernoulli}(P_{ij})$.   While $P_{ij}= f(\xi_i, \xi_j)$, where $\xi_i$'s are latent, our goal is to estimate $P$ for the single realization of $\xi_i$'s that gave rise to the data, rather than the function $f$.     We think of $f$ as a fixed unknown smooth function on $[0,1]^2$, with formal smoothness assumptions to be stated later.     Let $e_{ij} = e_{ij}(P_{ij})$ denote the Bernoulli error and omit its dependence on $P$.   We can then write  
\begin{equation}
A_{ij} = P_{ij} + e_{ij} = f(\xi_i, \xi_j) + e_{ij} .  \label{equation::NonParaRegression}
\end{equation}
Formulation \eqref{equation::NonParaRegression} resembles a nonparametric regression problem, except that the $\xi_i$ are not observed.   This has important consequences: for example, assuming further smoothness in $f$ beyond order one does not improve the minimax error rate when estimating $P$ \citep{gao2014rate}.    Our approach is to apply neighborhood smoothing, which would be natural had the latent variables $\xi_i$'s been observed.   Intuitively, if we had a set $\calN_i$ of neighbors of a node $i$, in the sense that $\calN_i = \{i':  P_{i'\cdot} \approx P_{i \cdot}\}$, where  $P_{i\cdot}$ represents the $i$-th row of $P$, then we could estimate $P_{i\cdot}$ by averaging $A_{i'\cdot}$ over $i' \in \calN_i$.   Postponing the question of how to select $\calN_i$  until Section \ref{section::estimation::neighborhoodselection}, we define a general neighborhood smoothing estimator by 
\begin{equation}
\tilde{P}_{ij} = \frac{\sum_{i'\in \calN_i}A_{i'j}}{|\calN_i|}\ . \label{equation::asymmetric_estimator}
\end{equation}
When the network is symmetric, we instead use a symmetric estimator
\begin{equation}
\hat{P} = \left(\tilde{P}+\tilde{P}^T\right)/2\ . \label{equation::estimator}
\end{equation}
%\begin{equation}
%\hat{P}_{ij}=\frac{1}{2}\left( \frac{\sum_{i'\in\calN_i}A_{i'j}}{|\calN_i|}  +  \frac{\sum_{j'\in\calN_j}A_{ij'}}{|\calN_j|} \right)  . 
%\end{equation}
%It is immediately evident that $\hat P$ is symmetric if $A$ is symmetric, although our method can be applied to either directed or undirected networks.   
For simplicity, we focus on undirected networks.   A natural alternative is to average over $\calN_i\times\calN_j$, but \eqref{equation::asymmetric_estimator} and \eqref{equation::estimator} allow vectorization and are thus more computationally efficient.   Our estimator can also be viewed as a relaxation of step function approximations such as \citet{olhede2014network}. In step function approximations, the neighborhood for each node is the nodes from its block, so the neighborhoods for two nodes from the same block are very similar, and the blocks have to be estimated first;   in contrast, neighborhood smoothing provides for more flexible neighborhoods that differ from node to node, and an efficient way to select the neighborhood, which we will discuss next.

\subsection{Neighborhood selection}\label{section::estimation::neighborhoodselection}

Selecting the neighborhood $\calN_i$ in \eqref{equation::estimator} is the core of our method.  Since we estimate $P_{i\cdot}$ by averaging over $A_{i'\cdot}$ for $i'\in\calN_i$, good neighborhood candidates $i'$ should have $f(\xi_{i'}, \cdot)$ close to $f(\xi_i, \cdot)$, which implies $P_{i'\cdot}$ close to $P_{i\cdot}$. We use the $\ell_2$ distance between graphon slices to quantify this, defining 
\begin{equation}
d(i, i')=\|f(\xi_{i}, \cdot) - f(\xi_{i'}, \cdot)\|_2=\left\{  \int_0^1 \left| f(\xi_{i}, v) - f(\xi_{i'}, v) \right|^2 \textrm{d}v  \right\} ^{1/2} \ .  \label{equation::graphonslicedistance}
\end{equation}
While one may consider more general $\ell_p$ or other distances,  the $\ell_2$ distance is particularly easy to work with theoretically.    For the purpose of neighborhood selection, it is not necessary to estimate $d(i, i')$;  it suffices to provide a tractable upper bound.   For integrable functions $g_1$ and $g_2$ defined on $[0, 1]$, define $\langle g_1, g_2 \rangle = \int_0^1 g_1(u)g_2(u)\textrm{d}u$.  Then we can write 
\begin{align}
d^2(i, i') &= \langle  f(\xi_i, \cdot), f(\xi_i, \cdot)  \rangle   +   \langle  f(\xi_{i'}, \cdot), f(\xi_{i'}, \cdot)  \rangle - 2\langle  f(\xi_i, \cdot), f(\xi_{i'}, \cdot)  \rangle . 
\label{equation::decomposel2distance}
\end{align}
The third term in \eqref{equation::decomposel2distance} can be estimated by $2\langle A_{i\cdot}, A_{i'\cdot} \rangle / n$, where $A_{i\cdot}$ and $A_{i'\cdot}$ are nearly independent up to a single duplicated entry due to symmetry.     The first two terms in \eqref{equation::decomposel2distance} are more difficult, since $\langle A_{i\cdot}, A_{i\cdot} \rangle / n$ is not a good estimator for $\langle  f(\xi_i, \cdot), f(\xi_i, \cdot)  \rangle$.  Here we present the intuition and provide a full theoretical justification in Theorem \ref{theorem::errorrate}. For simplicity, assume for now $f$ is Lipschitz with a Lipschitz constant of $1$.
The idea is to use nodes with graphon slices similar to $i$ and $i'$ to make the terms in the inner product distinct graphon slices.  With high probability, for each $i$, we can find $\ti\neq i$ such that $|\xi_{\ti}-\xi_i|\leq  e_n$, where the sequence $e_n$ is a function of $n$ and represents the error rate to be specified later.   Then $\|f(\xi_i, \cdot) - f(\xi_\ti, \cdot)\|_2\leq e_n$, and we can approximate $\langle f(\xi_i, \cdot), f(\xi_i, \cdot) \rangle$ by $\langle f(\xi_i, \cdot), f(\xi_\ti, \cdot) \rangle$, where the latter can now be estimated by $\langle  A_{i\cdot}, A_{\ti\cdot}  \rangle / n$.   The same technique can be used to approximate the second term in \eqref{equation::decomposel2distance}, but all these approximations depend on the unknown $\xi$'s. To deal with this,  we rearrange the terms in \eqref{equation::decomposel2distance} as follows:
\begin{align}
d^2(i, i') &= \langle  f(\xi_i, \cdot) - f(\xi_{i'}, \cdot), f(\xi_i, \cdot)  \rangle    -   \langle  f(\xi_i, \cdot) - f(\xi_{i'}, \cdot), f(\xi_{i'}, \cdot)  \rangle \nonumber\\
& \leq \left| \langle  f(\xi_i, \cdot) - f(\xi_{i'}, \cdot), f(\xi_{\ti}, \cdot)  \rangle \right| + \left| \langle  f(\xi_i, \cdot) - f(\xi_{i'}, \cdot), f(\xi_{\tip}, \cdot)  \rangle \right|  +  2  e_n %\tilde{C}\theerrorrate
\nonumber\\
& \leq 2 \max_{k\neq i,i'}|\langle f(\xi_i, \cdot) - f(\xi_{i'}, \cdot), f(\xi_k, \cdot) \rangle| + 2  e_n \ .%\tilde{C}\theerrorrate   
\label{equation::controldsquared}
\end{align}
The inner product on the right side of \eqref{equation::controldsquared} can be estimated by
\begin{equation}
\tilde{d}^2(i, i')= \max_{k\neq i,i'}\left|\langle A_{i\cdot} - A_{i'\cdot}, A_{k\cdot} \rangle\right|\big/ n \ . 
\label{equation::defgraphonslicedistance}
\end{equation}
Intuitively, the neighborhood $\calN_i$ should consist of  $i'$s with small $\tilde{d}(i, i')$.   To formalize this, let $q_i(h)$ denote the $h$-th sample quantile of the set $\left\{ \tilde{d}(i, i'): i'\neq i \right\}$, where $h$ is a tuning parameter, and set 
\begin{equation}
\calN_i = \left\{ i'\neq i: \tilde{d}(i, i')\leq q_i(h) \right\}  \label{equation::neighborhoodadmission}
\end{equation}
where for notational simplicity we suppress  the dependence of $\calN_i$ on $h$.   Thresholding at a quantile rather than at some absolute value is convenient since real networks vary in their average node degrees and other parameters, which leads to very different values and distributions of $\tilde{d}$.  Empirically, thresholding at a quantile shows significant advantage in stability and performance compared to an absolute threshold.  The choice of $h$ will be guided by both the theory in Section \ref{section::estimation::errorrate}, which suggests  the order of $h$, and empirical performance which suggests the constant factor.  More details are included in the Supplementary Material. 

An important feature of this definition is that the neighborhood admits nodes with similar graphon slices, but not necessarily similar $\xi$'s.   For example, in the stochastic block model, all nodes from the same block would be equally likely to be included in each other's neighborhoods, regardless of their $\xi$'s.  Even though we use $\xi_i$ and $\xi_{i'}$ to motivate \eqref{equation::controldsquared}, we always work with the function values $f(\xi_i, \xi_j)$'s and never attempt to estimate the $\xi_i$ or $f$ by themselves. This contrasts with the approaches of \citet{chan2014consistent} and \citet{yang2014nonparametric}, and gives us a substantial computational advantage as well as much more flexibility in assumptions.

\subsection{Consistency of the neighborhood smoothing estimator}\label{section::estimation::errorrate}

We study the theoretical properties of our estimator for a family of piecewise Lipschitz graphon functions, defined as follows.

\begin{definition}[Piecewise Lipschitz graphon family]
	For any $\delta, L>0$, let $\calF_{\delta; L}$ denote a family of piecewise Lipschitz graphon functions $f: \ [0,1]^2 \rightarrow [0,1]$ such that
	\begin{inparaenum}[(i)]
		\item  there exists an integer $K\geq 1$ and a sequence $0=x_0 < \cdots < x_K=1$ satisfying $\min_{0\leq s \leq K-1}(x_{s+1} - x_s)\geq\delta$, and
		\item both $\left| f(u_1, v) - f(u_2, v) \right| \leq L|u_1-u_2|$ and $\left| f(u, v_1) - f(u, v_2) \right| \leq L|v_1-v_2|$ hold for all $u, u_1, u_2 \in [x_s, x_{s+1}]$, $v, v_1, v_2 \in [x_t, x_{t+1}]$ and $0\leq s,t \leq K-1$.
	\end{inparaenum}
\end{definition}

For any $P,Q\in\mathbb{R}^{m\times m}$, define $d_\inftt$, the normalized $\inftt$ matrix norm, by $$d_\inftt(P, Q) = m^{-1/2}\|P-Q\|_\inftt = \max_i m^{-1/2} \|P_{i\cdot}-Q_{i\cdot}\|_2 \ . $$  
Then we have the following error rate bound.

\begin{theorem}\label{theorem::errorrate}
	Assume that $L$ is a global constant and $\delta=\delta(n)$ depends on $n$, satisfying $\lim_{n\to\infty}\delta/\theerrorrate\to\infty$.   Then the estimator $\tilde{P}$ defined in \eqref{equation::estimator}, with neighborhood $\calN_i$ defined in \eqref{equation::neighborhoodadmission} and $h=C\theerrorrate$ for any global constant $C\in(0,1]$, satisfies
	\begin{equation}
	\max_{f\in \calF_{\delta; L}} \pr\left\{ d_\inftt(\tilde{P},P)^2 \geq C_1 \left(\frac{\log n}{n}\right)^{1/2} \right\} \leq n^{-C_2}  \label{equation::maintheorem}
	\end{equation}
	where $C_1$ and $C_2$ are positive global constants.
\end{theorem}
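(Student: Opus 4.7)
The plan is to split the entrywise error into a deterministic ``bias'' contribution and a stochastic ``variance'' contribution. Writing
\[
\tilde P_{ij}-P_{ij}=B_{ij}+V_{ij},\quad B_{ij}=\frac{1}{|\calN_i|}\sum_{i'\in\calN_i}(P_{i'j}-P_{ij}),\quad V_{ij}=\frac{1}{|\calN_i|}\sum_{i'\in\calN_i}(A_{i'j}-P_{i'j}),
\]
one obtains $d_\inftt(\tilde P,P)^2\le 2\max_i n^{-1}\sum_j B_{ij}^2+2\max_i n^{-1}\sum_j V_{ij}^2$, and the goal is to show each piece is $O(\theerrorrate)$ with probability at least $1-n^{-C_2}$ by a union bound over $i$.

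The first substantive step is to make the heuristic argument behind \eqref{equation::controldsquared} rigorous. Applying Bernstein's inequality to each of the $O(n^3)$ inner products $n^{-1}\langle A_{i\cdot}-A_{i'\cdot},A_{k\cdot}\rangle$ conditional on the $\xi_i$'s, and taking a union bound, one shows that $\tilde d^2(i,i')$ is within $O(\theerrorrate)$ of its population analogue $\max_{k\ne i,i'}n^{-1}|\langle P_{i\cdot}-P_{i'\cdot},P_{k\cdot}\rangle|$ uniformly in $(i,i')$. Combining this with the Cauchy--Schwarz bound $\max_k|\langle f(\xi_i,\cdot)-f(\xi_{i'},\cdot),f(\xi_k,\cdot)\rangle|\le d(i,i')$ on one side and the chain \eqref{equation::controldsquared} on the other yields
\[
\tilde d^2(i,i')\le d(i,i')+O(\theerrorrate),\qquad d^2(i,i')\le 2\tilde d^2(i,i')+O(\theerrorrate),
\]
holding simultaneously for all $(i,i')$ on a single event of probability $\ge 1-n^{-C}$. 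Since the $\xi_i$ are i.i.d.\ uniform and $\delta/\theerrorrate\to\infty$, a further union bound shows that for every $i$ there are at least $hn$ indices $i'$ with $|\xi_{i'}-\xi_i|\le\theerrorrate$ lying on the same Lipschitz piece as $\xi_i$. Each such $i'$ satisfies $d(i,i')\le L\theerrorrate$ and hence $\tilde d(i,i')=O((\theerrorrate)^{1/2})$, so $q_i(h)=O((\theerrorrate)^{1/2})$ and every $i'\in\calN_i$ satisfies $d^2(i,i')=O(\theerrorrate)$ with $|\calN_i|\ge hn$.

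For the bias, a further Bernstein bound on the Monte Carlo error between $n^{-1}\|P_{i'\cdot}-P_{i\cdot}\|_2^2$ and $d^2(i,i')$ over the i.i.d.\ $\xi_j$'s gives $n^{-1}\|P_{i'\cdot}-P_{i\cdot}\|_2^2=d^2(i,i')+O(\theerrorrate)$ uniformly. Then Jensen's inequality applied to the sum over $j$ yields
\[
\frac{1}{n}\sum_j B_{ij}^2\le\frac{1}{|\calN_i|}\sum_{i'\in\calN_i}\frac{\|P_{i'\cdot}-P_{i\cdot}\|_2^2}{n}=O(\theerrorrate),
\]
matching the target rate.

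The variance term is the main obstacle, because $\calN_i$ is itself a function of $A$ and so the indicators $\mathbf{1}\{i'\in\calN_i\}$ are not independent of the Bernoulli noise. The plan is to restrict to the high-probability event where $\calN_i$ is contained in the deterministic set $S_i:=\{i':d^2(i,i')\le C\theerrorrate\}$, and to exploit the fact that perturbing a single column of $A$ changes each $\tilde d^2(i,i')$ by only $O(1/n)$. This enables a leave-one-column-out decoupling: replace $\calN_i$ in the formula for $V_{ij}$ by a neighborhood $\calN_i^{(-j)}$ computed from $A$ with the $j$th column removed; this neighborhood is independent of $A_{\cdot j}$ and differs from $\calN_i$ by only an $O(1/n)$ perturbation in each $\tilde d^2$. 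For $\calN_i^{(-j)}$ fixed, Bernstein's inequality applied to $\sum_{i'\in\calN_i^{(-j)}}(A_{i'j}-P_{i'j})$ and summed over $j$ gives $n^{-1}\sum_j V_{ij}^2=O(|\calN_i|^{-1})=O((n\log n)^{-1/2})$, which is actually smaller than the bias contribution. Adding the two bounds and union-bounding over $i$ completes the proof; the leave-one-out decoupling described above is where the real technical work lies, since everywhere else the arguments reduce to standard concentration for sums of bounded independent random variables.
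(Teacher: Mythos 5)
Your decomposition into the bias term $B_{ij}$ and the noise term $V_{ij}$, your treatment of the neighborhood (concentration of the empirical distances $\tilde d$, existence of at least an $h$-fraction of nodes on the same Lipschitz piece within $\theerrorrate$ of $\xi_i$, hence $q_i(h)$ small and $\|P_{i'\cdot}-P_{i\cdot}\|_2^2/n = O(\theerrorrate)$ for all $i'\in\calN_i$), and your Jensen bound for $n^{-1}\sum_j B_{ij}^2$ all track the paper's Lemmas 1 and 2 and its bound on $J_2$ essentially step for step. The problem is the noise term, which is exactly where the paper warns that one cannot condition on $i'\in\calN_i$ and apply Bernstein naively. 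Your proposed fix --- a leave-one-column-out neighborhood $\calN_i^{(-j)}$ --- has a genuine gap: independence of $\calN_i^{(-j)}$ from $A_{\cdot j}$ buys you nothing unless you can also show that $\calN_i$ and $\calN_i^{(-j)}$ agree up to a negligible set, and an $O(1/n)$ shift of every $\tilde d^2(i,k)$ (and of the quantile $q_i(h)$ itself) does not imply that. Membership can flip for every node whose $\tilde d^2(i,k)$ lies within $O(1/n)$ of the threshold, and nothing in your argument (or in the paper's assumptions) controls how many values of $\{\tilde d^2(i,k)\}_{k\neq i}$ cluster near the $h$-th quantile; e.g.\ when the values spread over a scale of order $\theerrorrate$, of order $(n/\log n)^{1/2}$ of them can sit in a window of width $1/n$, which is a non-negligible fraction of $|\calN_i|\asymp (n\log n)^{1/2}$, and the flipped set depends on column $j$, so the dependence you set out to remove reappears there. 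Making this rigorous would require an anti-concentration/spacing statement for the empirical distances near $q_i(h)$, which is substantial unproved work. A smaller point: even granting the decoupling, per-$j$ Bernstein plus a union bound gives $n^{-1}\sum_j V_{ij}^2 = O(\log n/|\calN_i|) = O(\theerrorrate)$, not the stronger $O(|\calN_i|^{-1})$ you claim; fortunately only the $O(\theerrorrate)$ rate is needed.

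The paper avoids the decoupling issue altogether by a more elementary manipulation that you could adopt: expand $n^{-1}\sum_j V_{ij}^2 = |\calN_i|^{-2}\{\sum_{i'\in\calN_i} n^{-1}\sum_j (A_{i'j}-P_{i'j})^2 + \sum_{i'\neq i''\in\calN_i} n^{-1}\sum_j (A_{i'j}-P_{i'j})(A_{i''j}-P_{i''j})\}$. The diagonal part is bounded deterministically by $1/|\calN_i| \leq C_0^{-1}(n\log n)^{-1/2}$, and each cross term $n^{-1}\sum_j (A_{i_1j}-P_{i_1j})(A_{i_2j}-P_{i_2j})$ is bounded by $O(\theerrorrate)$ \emph{uniformly over all} $O(n^2)$ ordered pairs $(i_1,i_2)$ by Bernstein plus a union bound, so it is irrelevant which pairs the data-dependent neighborhood selects. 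This yields a noise contribution of order $\theerrorrate$, the same order as the bias, which is all the theorem requires. As written, your proof is incomplete at its acknowledged crux; replacing the leave-one-out step with this uniform cross-term bound closes the gap.
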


Since for any $P, Q\in \mathbb{R}^{m\times m}$, we have $d_\inftt(P, Q)\geq m^{-1}\|P-Q\|_F$, Theorem \ref{theorem::errorrate} yields
\begin{corollary}  Under conditions of  Theorem \ref{theorem::errorrate}, 
	\begin{equation}
	\max_{f\in \calF_{\delta; L}} \pr\left\{ \frac{1}{n^2}\|\tilde{P} - P\|_F^2 \geq C_1 \left(\frac{\log n}{n}\right)^{1/2} \right\} \leq n^{-C_2}  \ . 
	\label{equation::maintheorem::corollary}
	\end{equation}
\end{corollary}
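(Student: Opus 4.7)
The plan is to recognize that the corollary is an immediate consequence of Theorem \ref{theorem::errorrate} combined with the pointwise inequality $d_\inftt(P, Q)\geq m^{-1}\|P-Q\|_F$ that the excerpt states for any $P, Q \in \mathbb{R}^{m\times m}$. The corollary is not a standalone result with new probabilistic content; it simply restates the bound on $d_\inftt(\tilde P,P)^2$ in the more familiar normalized Frobenius metric, at the cost of weakening the row-wise uniform guarantee.

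First I would verify the norm inequality. By definition $d_\inftt(P,Q)^2 = m^{-1} \max_i \|P_{i\cdot}-Q_{i\cdot}\|_2^2$, and bounding each row by the maximum gives
\begin{equation*}
\|P-Q\|_F^2 = \sum_{i=1}^m \|P_{i\cdot}-Q_{i\cdot}\|_2^2 \;\leq\; m \cdot \max_i \|P_{i\cdot}-Q_{i\cdot}\|_2^2 \;=\; m^2\, d_\inftt(P,Q)^2,
\end{equation*}
so $m^{-2}\|P-Q\|_F^2 \leq d_\inftt(P,Q)^2$, which is the stated inequality squared.

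Second, I would apply this with $m=n$, $P=\tilde P$ and $Q=P$ to obtain the event inclusion
\begin{equation*}
\left\{\frac{1}{n^2}\|\tilde P - P\|_F^2 \geq C_1 \left(\frac{\log n}{n}\right)^{1/2}\right\} \subseteq \left\{d_\inftt(\tilde P, P)^2 \geq C_1 \left(\frac{\log n}{n}\right)^{1/2}\right\}.
\end{equation*}
Taking probabilities and then maximizing over $f\in\calF_{\delta;L}$, the right-hand probability is bounded by $n^{-C_2}$ by Theorem \ref{theorem::errorrate}, using exactly the same constants $C_1, C_2$. This yields \eqref{equation::maintheorem::corollary}.

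Since the deduction is a one-line consequence of a norm comparison and a monotonicity of probability, there is no substantive obstacle; the only thing to be careful about is to preserve the constants inherited from the theorem rather than introducing new ones, and to note that the inclusion is uniform in $f$ so the max over $\calF_{\delta;L}$ transfers directly.
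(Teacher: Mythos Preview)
Your proposal is correct and matches the paper's approach exactly: the paper derives the corollary in one line from Theorem \ref{theorem::errorrate} via the norm comparison $d_\inftt(P,Q)\geq m^{-1}\|P-Q\|_F$, which is precisely the inequality you verify and apply.
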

The bound \eqref{equation::maintheorem::corollary} continues to hold if we replace $\tilde{P}$ by $\hat{P}$, but \eqref{equation::maintheorem} may not hold.  Next, we show that under the $(2,\infty)$ norm, our estimator $\tilde{P}$ is nearly rate-optimal, up to a $\log n$ factor.

\begin{theorem}\label{proposition:sliceminimax}
	Under conditions of Theorem \ref{theorem::errorrate}, we have
	\begin{equation}
	\inf_{\hat{P}}\sup_{f\in\mathcal{F}_{\delta; L}} E \left\{d_\inftt^2(\hat{P}, P)\right\}\geq 	C\left(n\log n\right)^{-1/2}
	\end{equation}
	for some global constant $C>0$.
\end{theorem}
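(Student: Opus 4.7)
The plan is to establish this lower bound by the classical reduction of minimax estimation to multiple hypothesis testing, invoking Fano's inequality. The target rate $(n\log n)^{-1/2}$ is of the same order as the pointwise variance of the neighborhood smoothing estimator at its optimal bandwidth, so the key insight is that this variance cannot be improved upon information-theoretically under the row-wise $(2,\infty)$-norm, even though the corresponding rate under the Frobenius norm would be slightly smaller.

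Within $\mathcal{F}_{\delta;L}$ one constructs a finite sub-family $\{f_\tau : \tau \in \mathcal{T}\}$ of small piecewise-Lipschitz perturbations of the baseline constant graphon $f_0 \equiv 1/2$. A natural candidate is the tensor-product form
\[
f_\tau(u,v) = \tfrac{1}{2} + \epsilon\, g_\tau(u)\, g_\tau(v), \qquad \tau \in \{-1,+1\}^K,
\]
with $g_\tau:[0,1]\to\{\pm 1\}$ piecewise constant on a partition of minimum spacing $\delta$, so that $K \leq \lfloor 1/\delta \rfloor$. A Varshamov--Gilbert sub-packing $\mathcal{T}' \subset \{\pm 1\}^K$ of size $\exp(cK)$ with Hamming distance $\geq K/4$ is then extracted. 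The analysis balances three quantities: (i) the minimum pairwise $d_{2,\infty}^2$-separation, obtained by direct calculation on the block structure of $g_\tau$; (ii) the Kullback--Leibler divergence $\mathrm{KL}(\mathbb{P}_{f_\tau}\|\mathbb{P}_{f_{\tau'}}) \leq C\|P_\tau - P_{\tau'}\|_F^2$, which holds because the entries of $P_\tau$ stay in $[1/4,3/4]$; and (iii) the log-packing size $\log|\mathcal{T}'| \asymp K$. Plugging these into Fano's inequality with a uniform prior on $\mathcal{T}'$, then converting the testing lower bound back via the triangle inequality $E[d_{2,\infty}^2(\hat P, P)] \geq \tfrac{1}{4}(\min \text{separation})\,\inf_{\hat\tau}\max_\tau \mathbb{P}(\hat\tau \neq \tau)$, yields a minimax lower bound, with $\epsilon$ and $K$ calibrated against $\delta$ and $n$ to match the target.

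The principal technical obstacle is that the $(2,\infty)$-norm separation is determined by the worst-row error, and a fully symmetric tensor-product perturbation distributes its $\ell_2$-mass evenly across all rows, giving only Frobenius-type separation which is too small for the $(n\log n)^{-1/2}$ target. Obtaining the target rate requires a sparser perturbation that concentrates the row-wise contrast on a few targeted rows --- for example, replacing $g_\tau(u)g_\tau(v)$ by a cross-shaped perturbation of the form $\tfrac{1}{2}[\psi(u)g_\tau(v) + \psi(v)g_\tau(u)]$ with $\psi$ a Lipschitz bump of width $\asymp \delta$, so that flipping bits of $\tau$ produces a disproportionately large $\ell_2$-change in the rows indexed by the bump's support while keeping the overall Frobenius change, and hence the KL budget, controlled. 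Managing the geometric tradeoffs in this sparser construction within the piecewise-Lipschitz class, and verifying that the assumption $\delta\gg(n^{-1}\log n)^{1/2}$ is precisely the regime in which the target separation is simultaneously achievable and information-theoretically indistinguishable, is the principal quantitative challenge of the proof.
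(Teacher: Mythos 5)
Your general machinery (Gilbert--Varshamov packing, Kullback--Leibler divergence bounded by a Frobenius norm when entries stay near $1/2$, then Fano/Yu-type reduction) matches the paper's, and your qualitative insight is also the right one: the contrast must be concentrated on a thin strip of rows of width $\asymp\delta$ so that the $(2,\infty)$ norm, which only sees the worst row, picks up a separation much larger than the per-entry Frobenius cost. However, the construction you propose does not reach the target rate, and the shortfall is quantitative, not just a matter of bookkeeping. Your hypotheses are indexed by graphon-level sign patterns $\tau\in\{\pm1\}^K$ with $K\leq\lfloor 1/\delta\rfloor$, so $\log|\mathcal{T}'|\asymp 1/\delta$. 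The Kullback--Leibler divergence of your cross-shaped perturbation is of order $n^2\delta\,\epsilon^2$ (the entries differ only on a strip of $\asymp n\delta$ rows and its transpose), so Fano forces $\epsilon^2\lesssim (n\delta)^{-2}$, and the resulting $d_\inftt^2$ separation is $\asymp\epsilon^2\lesssim (n\delta)^{-2}\asymp (n\log n)^{-1}$ at the smallest admissible $\delta$ --- a factor $(n\log n)^{1/2}$ short of the target. This cannot be fixed by tuning the bump widths: to get separation $(n\log n)^{-1/2}$ with the perturbation supported on $\asymp n\delta\gtrsim(n\log n)^{1/2}$ rows, Fano requires $\log N\asymp n$ hypotheses, and no packing of \emph{distinct graphon functions} in a (piecewise) Lipschitz class has metric entropy anywhere near $e^{cn}$ at the relevant scale.

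The missing idea, which is how the paper proceeds, is to keep a \emph{single} graphon --- a three-block stochastic block model with a small block $0$ of relative size $m/n\asymp\delta$ and two exchangeable blocks $1,2$ of size $\ell\asymp n/3$, where only the $(0,1)$ entry of $B$ is perturbed by $\phi$ --- and to pack over the \emph{node-level membership assignments}: a Gilbert--Varshamov code on which of the two exchangeable communities each of $\ell$ designated nodes belongs to gives $\log N\asymp n$ hypotheses, all realizable within $\calF_{\delta;L}$. Then any row belonging to block $0$ differs in $\gtrsim\ell/4$ entries by $\phi$ between two hypotheses, so $d_\inftt^2\gtrsim\phi^2\ell/(4n)\asymp\phi^2$, while the matrices differ in only $O(mn)$ entries, so the Kullback--Leibler divergence is $\lesssim mn\phi^2$. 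Fano with $\log N\asymp\ell\asymp n$ then allows $\phi^2\asymp \ell/(mn)\asymp 1/m\asymp (n\delta)^{-1}$, and taking $m$ as small as the constraint $\delta\gg\theerrorrate$ permits, i.e. $m\asymp(n\log n)^{1/2}$, yields the claimed $(n\log n)^{-1/2}$ bound. In short: your strip idea is the right geometry, but the exponential-in-$n$ richness has to come from the latent label ambiguity of exchangeable blocks, not from perturbing the graphon function itself.
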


To the best of our knowledge, result \eqref{equation::maintheorem} is the only $(2,\infty)$ error rate available for polynomial time graphon estimation methods.  Most previous work focused on the mean squared error and only considered the special case $\delta=1$.  For $\delta = 1$, the minimax error rate $\log n /n$ established by \citet{gao2014rate} has so far only been achieved by methods that require combinatorial optimization or evaluation, including \citet{gao2014rate} and \citet{klopp2015oracle}.  The rate $\left(\log n/n\right)^{1/2}$ was previously achieved by combinatorial methods, including %Wolfe and Ohlede (manuscript, arXiv:1309.5936) 
Wolfe and Olhede (arXiv:1309.5936) and \citet{olhede2014network}.    Among computationally efficient methods, %the best error rate we are aware of is achieved by 
singular value thresholding (\citet{chatterjee2014matrix}, Theorem 2.7) achieves $n^{-1/3}$.   Additionally, the sort-and-smooth method proposed by \citet{chan2014consistent} achieves the minimax error rate under the strong assumption that $f$ has strictly monotone expected node degrees $d_f(v) = \int_0^1 f(u, v)\textrm{d}u$.  An anonymous referee of this manuscript sent us a proof that thresholding the leading $k$ singular values of the matrix $A$ achieves the mean squared error of $k/n+k^{-2}$, where the variance $k/n$ is due to \citet{candes2011tight} and $k^{-2}$ is the bias.  Taking $k=n^{1/3}$ gives the best known mean squared error rate of $n^{-2/3}$ for a computationally efficient algorithm.  For the graphon family $f\in \calF_{\delta; L}$ where $\delta/\theerrorrate\to\infty$ that we study, the $n^{-1/3}$ singular value thresholding method and our method achieve the same mean squared error rate.

For the case of general $\delta$, we can show that the minimax rate  of $\log n /n$ established by \citet{gao2014rate} still holds for the family $\calF_{\delta; L}$, in Proposition \ref{proposition:errorrate}; see the Supplementary Material.

\begin{proposition}\label{proposition:errorrate}
	Under conditions of Theorem \ref{theorem::errorrate},  when $\delta/(\log n/n)^{1/2}\to\infty$, there exists a global constant $C_3>0$ such that
	\[
	\inf_{\tilde{P}} \max_{f\in\calF_{\delta;L}}\ep \left\{ \frac{1}{n^2}\|\hat{P} - P\|_F^2 \right\} \asymp \frac{\log n}{n} \ .
	\]
\end{proposition}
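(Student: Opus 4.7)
\emph{Proof plan.} The claim combines matching minimax lower and upper bounds of order $\log n/n$, which I would handle separately.

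For the \emph{lower bound}, observe that by taking $K=1$ in the definition of $\calF_{\delta;L}$ we recover the class of globally $L$-Lipschitz graphons on $[0,1]^2$, which is a valid subset of $\calF_{\delta;L}$ for any $\delta\leq 1$. Monotonicity of the minimax risk in the parameter family then gives
\[
\inf_{\hat{P}}\sup_{f\in\calF_{\delta;L}} \ep\|\hat{P}-P\|_F^2/n^2 \;\geq\; \inf_{\hat{P}}\sup_{f\in\calF_{1;L}} \ep\|\hat{P}-P\|_F^2/n^2 \;\gtrsim\; \frac{\log n}{n},
\]
where the last step is the Fano--Assouad-type lower bound of \citet{gao2014rate} for the globally Lipschitz class, obtained via perturbations of a constant graphon inside a single block. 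No modification of that argument is needed.

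For the \emph{upper bound} I would employ the block-constant least squares estimator $\hat P_{LS}$ of \citet{gao2014rate}, which minimises the residual sum of squares over all partitions of $\{1,\ldots,n\}$ into $k$ groups. Their oracle inequality,
\[
\ep\|\hat P_{LS}-P\|_F^2/n^2 \;\lesssim\; \|P^*-P\|_F^2/n^2 + k^2/n^2 + (\log k)/n,
\]
holds for any block-constant comparator $P^*\in\Theta_k[n]$. I would pick $k=K':=\lfloor (n/\log n)^{1/2}\rfloor$, which renders both stochastic terms $O(\log n/n)$, and construct $P^*$ by first sorting the nodes by the latent positions $\xi_{(i)}$ and then, within each Lipschitz piece $[x_s,x_{s+1}]$ of length $l_s\geq\delta$, allocating $k_s=\lceil K' l_s\rceil$ consecutive groups of nearly equal size. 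The hypothesis $\delta K'\to\infty$ ensures $k_s\geq 1$, so every group is contained in a single piece and the total count is $\sum_s k_s\leq K'+K=O(K')$.

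The alignment with piece boundaries is decisive for the rate: on every cell $I_i\times I_j$ of the induced product partition, both coordinates lie in single Lipschitz pieces, so $f$ varies by at most $L(|I_i|+|I_j|)\lesssim L/K'$, and summing the squared bias over the $(K')^2$ cells yields $\|P^*-P\|_F^2/n^2=O(L^2/(K')^2)=O(\log n/n)$. Because no cell straddles a boundary, the $O(K/K')$ bias term that would otherwise arise from a naive uniform partition is absent. \emph{The main technical obstacle} is passing from this idealised $\xi$-based partition to one admissible for the estimator, which only sees $A$. I would invoke the Dvoretzky--Kiefer--Wolfowitz inequality to control the sup-norm discrepancy between the empirical and uniform measures on $[0,1]$ by $(n^{-1}\log n)^{1/2}$ on an event of probability at least $1-n^{-C}$, on which each group size matches its nominal value up to a $1+o(1)$ factor; the complementary event contributes $O(n^{-C})$ to the expected MSE and is absorbed into the leading term. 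Combining this bias bound with the stochastic rate from \citet{gao2014rate} produces the matching upper bound and completes the proof.
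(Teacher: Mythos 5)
Your proposal is correct and follows essentially the same route as the paper: the lower bound by restricting to the globally Lipschitz subclass ($\delta=1$, i.e.\ $K=1$) and citing the $\log n/n$ lower bound of \citet{gao2014rate}, and the upper bound by running the least squares analysis of \citet{gao2014rate} with an oracle block partition realigned to the piece boundaries $x_s$, which is possible precisely because $\delta\succ(\log n/n)^{1/2}\succ 1/k$, so the per-entry bias is $O(L/k)$ and the rate $\log n/n$ follows. One small remark: the Dvoretzky--Kiefer--Wolfowitz step is not actually needed for admissibility, since the least squares estimator optimizes over all assignments and the comparator $z^*$ in the oracle inequality is allowed to depend on the latent $\xi_i$'s --- the paper simply defines the groups by $\xi$-intervals obtained by splitting the length-$1/k$ intervals at the $x_s$'s --- although in your equal-node-count construction DKW does legitimately control each group's $\xi$-diameter, so your variant also goes through.
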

Whether this minimax error rate can be achieved by a computationally efficient method remains an open question.

\section{Probability matrix estimation on synthetic networks}
\label{section::simulations}

%\subsection{Comparison with benchmarks}\label{section::simulation::vsbenchmarks}

In this section we evaluate the performance of our symmetric estimator \eqref{equation::estimator} on estimating the probability matrix for synthetic networks.   We generate the networks from the four graphons listed in Table \ref{table::fourgraphons}, selected to have different features in different combinations (monotone degrees, low rank, etc).   The corresponding probability matrices are pictured in the first column of Figure \ref{figure::simulation::benchmark::graphon_1} (lower triangular half).  All networks have $n = 2000$ nodes. % to generate $P$ from the function $f$.  

\begin{table}[h!]
	\tbl{Synthetic graphons}{
		%		\resizebox{\textwidth}{!}{
		\begin{tabular}{clccc}\smallskip
			Graphon			& Function $f(u,v)$	& Monotone degrees	& Rank & Local structure\\
			1	& $k/(K+1)$ if $u, v \in ( (k-1)/K, k/K )$,	& Yes	& $\lfloor \log n \rfloor$	& No\\
			& $0\textrm{$\cdot$}3/(K+1)$ otherwise; $K=\lfloor\log n\rfloor$  & & & \\
			2	& $\sin\left\{5\pi(u+v-1) + 1\right\}/2+0\textrm{$\cdot$}5$	& No	& 3	& No\\
			3	& $1-\Big[ 1+ \exp\big\{ 15\big(0\textrm{$\cdot$}8|u-v|\big)^{4/5}-0\textrm{$\cdot$}1 \big\}\Big]^{-1}$	& No	& Full	& No\\
			4	& $\left(u^2+v^2\right)/3 \cos\left\{1/\left(u^2+v^2\right)\right\} + 0.15$	& No	& Full	& Yes
		\end{tabular}
		%		}
	}\label{table::fourgraphons}
\end{table}

Additional empirical results in the Supplementary Material show that our method is robust to the choice of the constant factor $C$ in the bandwidth $h$, for simplicity, we set $C=1$ for the rest of this paper.    Here we focus on comparing to benchmarks.    From the general matrix denoising methods, we include the widely used method of universal singular value thresholding \citep{chatterjee2014matrix} and the $n^{1/3}$ leading singular value thresholding method suggested by a referee.   We also compare to the sort and smooth methods of \citet{chan2014consistent} and \citet{yang2014nonparametric}.  These methods differ only in that the latter one employs singular value thresholding to denoise the network as a pre-processing step.  Due to space constraints, we present both methods in Table \ref{table::MSEs} but only \citet{chan2014consistent} in figures, since they are visually very similar.  %%%%%%%%%% 

We also inlcude two approximations based on fitting a stochastic block model, called network histograms by \citet{olhede2014network}.   One is the oracle stochastic block model, where the blocks are based on the true values of the latent $\xi_i$'s.  This cannot be done in practice, but we use it as the gold standard for a step-function approximation.   The feasible version of this is an approximation based on a stochastic block model with estimated blocks; we fit it by regularized spectral clustering 
\citep{chaudhuri2012spectral}. 
Any other algorithm for fitting the stochastic blockmodel can be used to estimate the blocks;  for example, \citet{olhede2014network} used a local updating algorithm initialized with spectral clustering to compute their network histograms.   Here we chose regularized spectral clustering because of its speed and good empirical performance.      For both approximations, we set the number of blocks to $n^{1/2}$, as in \citet{olhede2014network}. 

A recent as yet  unpublished method kindly shared with us by E. Airoldi
proposes a stochastic block model approximation, adapting the method of \citet{airoldi2013stochastic} to work with a single adjacency matrix. It uses a dissimilarity measure $\sum_{k\neq i, i'}\left| \langle A_{i\cdot}-A_{k\cdot}, A_{k\cdot} \rangle \right|$, which we considered before choosing \eqref{equation::defgraphonslicedistance} because it leads to a better guaranteed error rate.  Airoldi's method then builds blocks by starting with one not-yet-clustered node $i$ and including all nodes whose dissimilarity from $i$  is below a threshold $\Delta$ as neighbors.    We found that our strategy of thresholding by quantile instead of a fixed threshold is more efficient and stable, and the theoretical error rate is better for our method.    

We present the heatmaps of results for a single realization in Figure~\ref{figure::simulation::benchmark::graphon_1}, and the root mean squared errors and the mean absolute errors  of $\hat P$ in Table \ref{table::MSEs}.    While these two errors mostly agree on method ranking, the few cases where they disagree indicate whether the errors are primariy coming from a small number of poorly estimated entries or are more uniformly distributed throughout the matrix.  

\begin{figure}[h!]
	\centering
	\includegraphics[width=0.16\textwidth]{./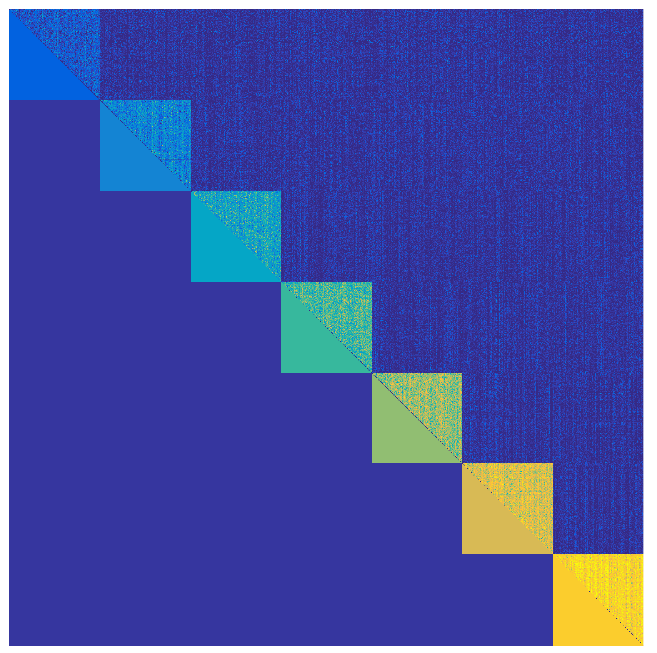}  
	\includegraphics[width=0.16\textwidth]{./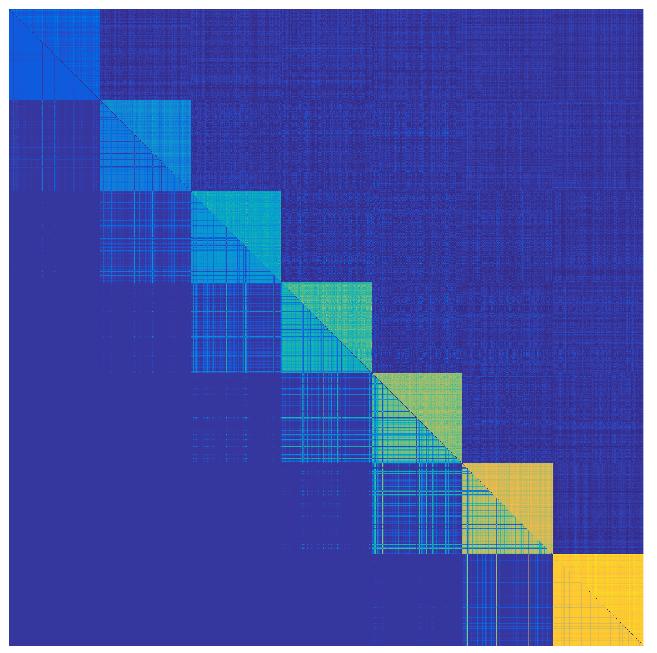} %{(b)}
	\includegraphics[width=0.16\textwidth]{./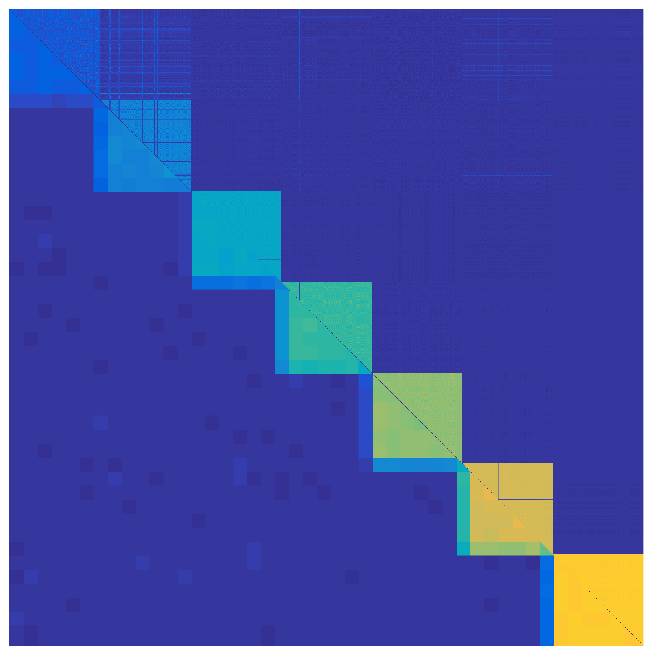} %{(c)}
	\includegraphics[width=0.16\textwidth]{./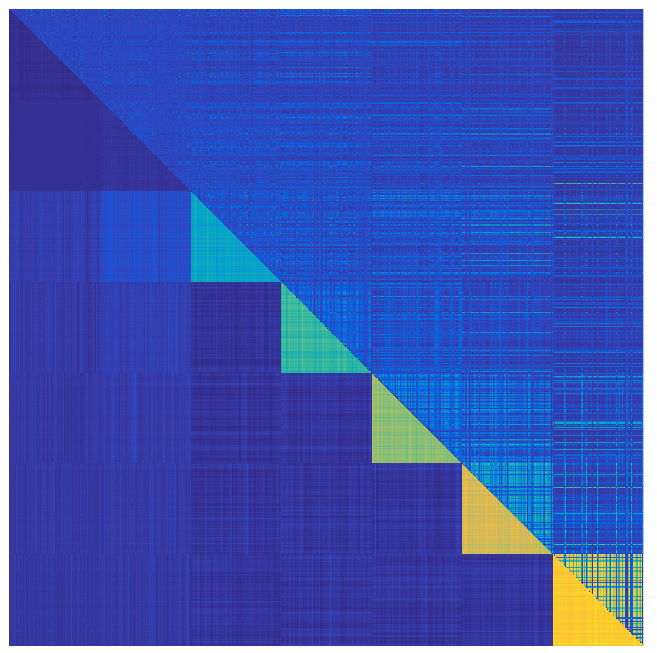} %{(d)} 
	\includegraphics[width=0.071\textwidth, height=0.11\textheight]{./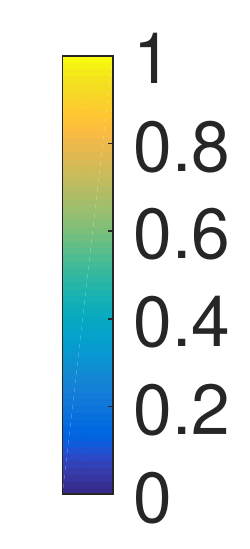}\\
	\includegraphics[width=0.16\textwidth]{./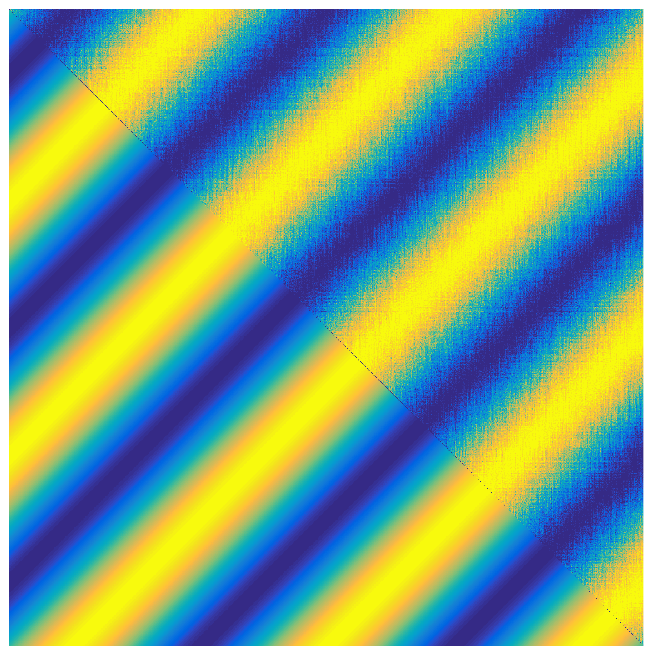}  %{(a)}
	\includegraphics[width=0.16\textwidth]{./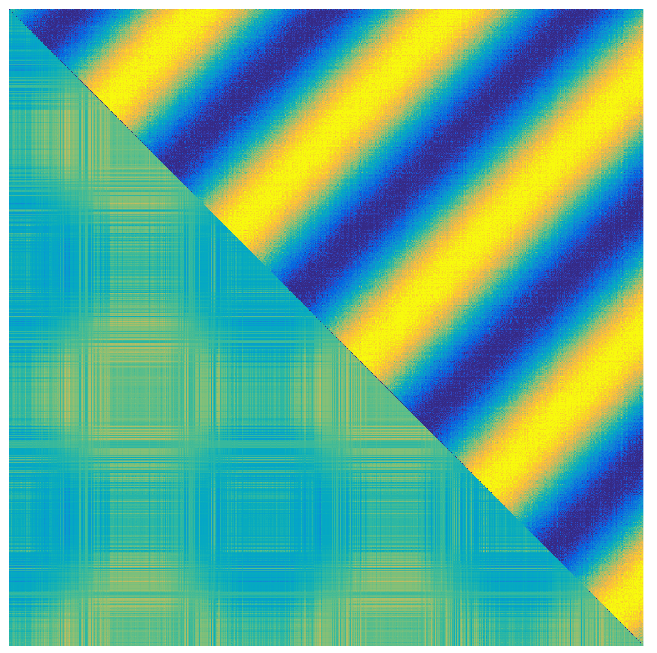} %{(b)}
	\includegraphics[width=0.16\textwidth]{./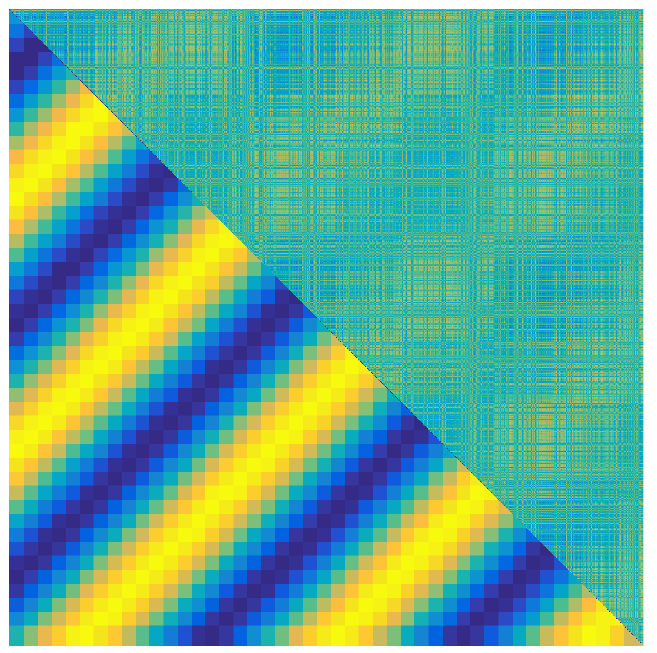} %{(c)}
	\includegraphics[width=0.16\textwidth]{./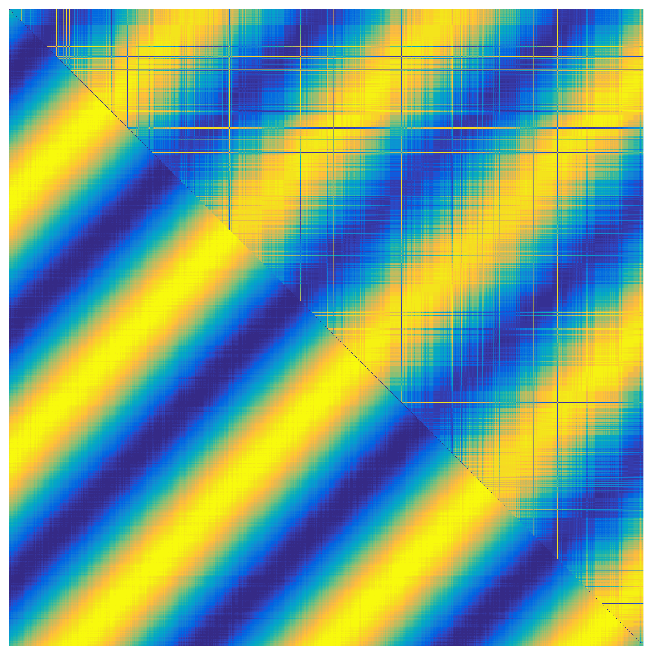}  
	\includegraphics[width=0.071\textwidth, height=0.11\textheight]{./colorbar.pdf}\\
	\includegraphics[width=0.16\textwidth]{./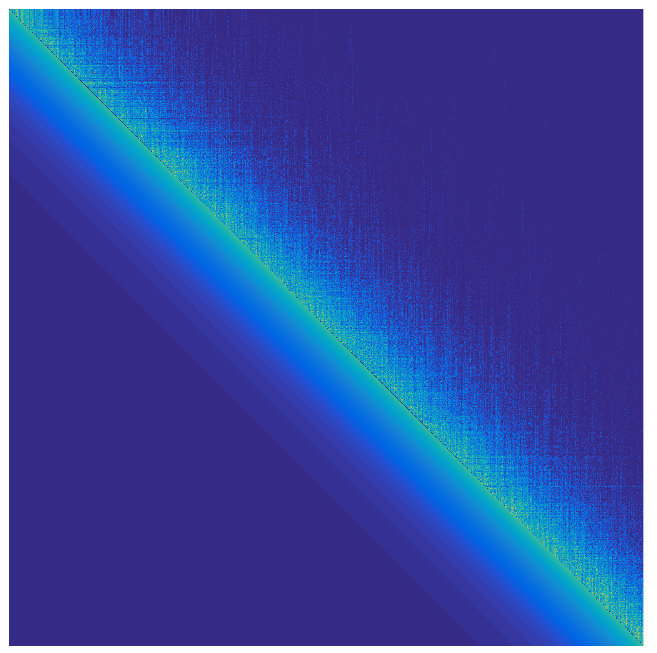} %{(a)}
	\includegraphics[width=0.16\textwidth]{./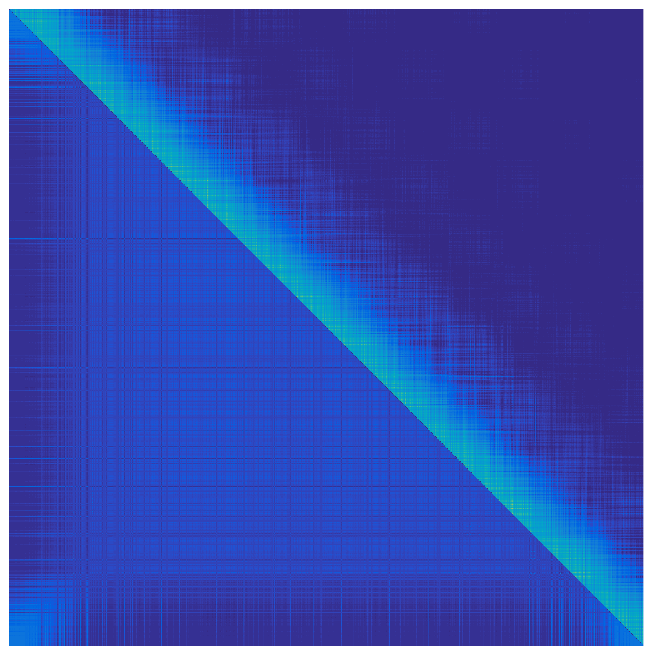} %{(b)}
	\includegraphics[width=0.16\textwidth]{./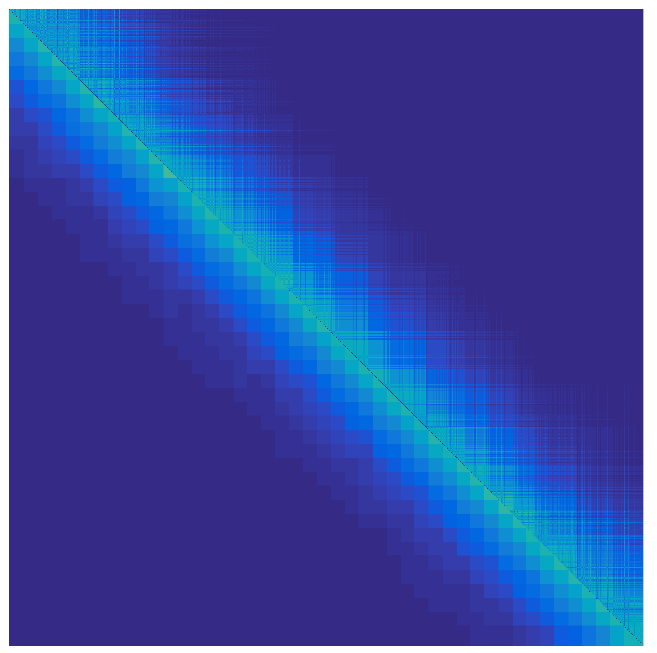} %{(c)}
	\includegraphics[width=0.16\textwidth]{./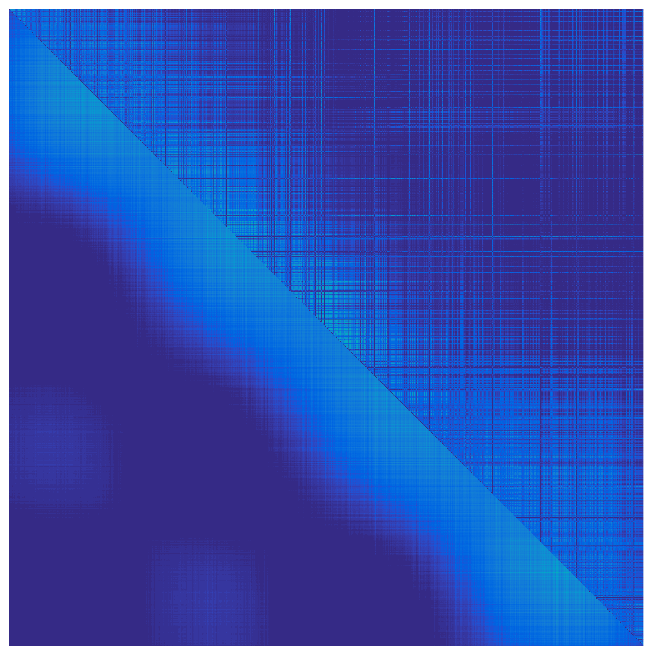} 
	\includegraphics[width=0.071\textwidth, height=0.11\textheight]{./colorbar.pdf}\\
	\includegraphics[width=0.16\textwidth]{./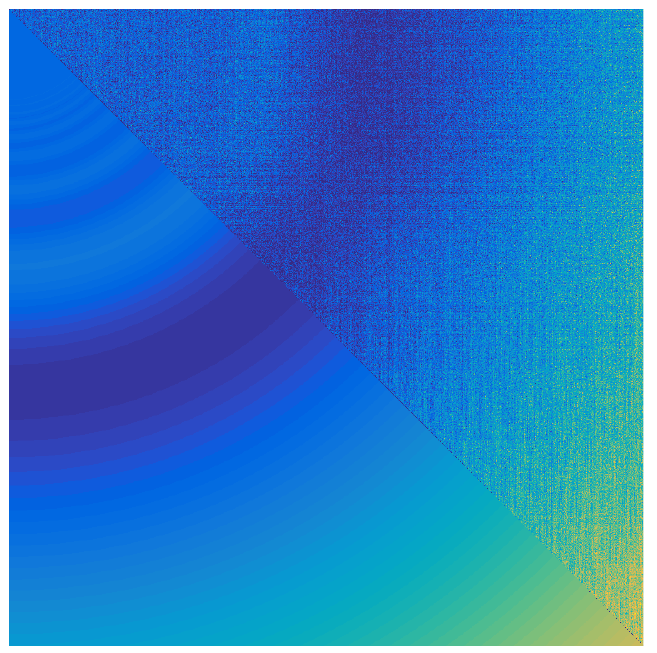} %{(a)}
	\includegraphics[width=0.16\textwidth]{./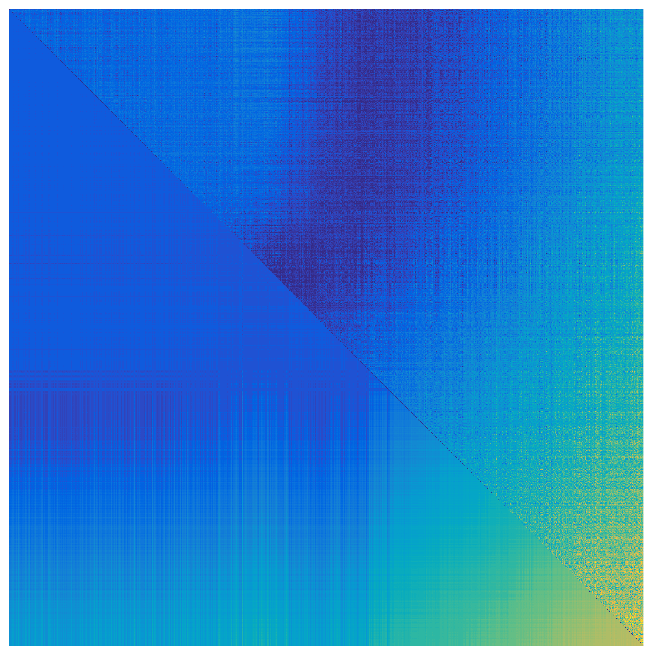} %{(b)}
	\includegraphics[width=0.16\textwidth]{./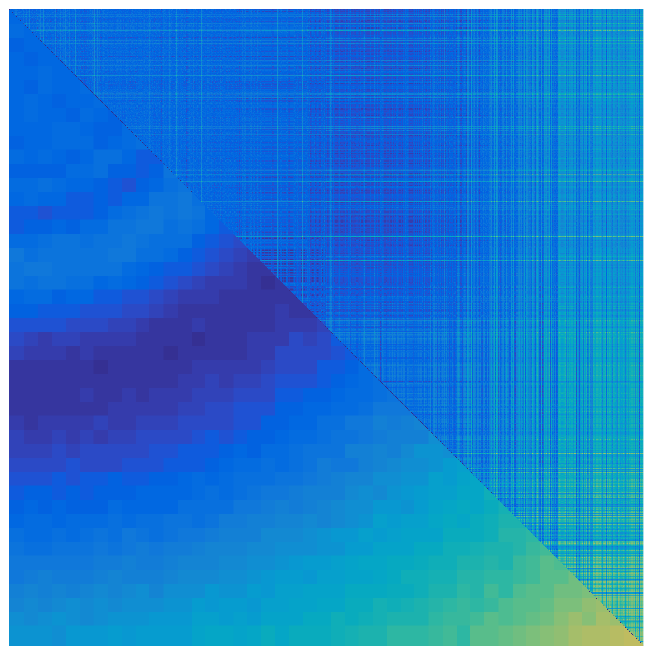} %{(c)}
	\includegraphics[width=0.16\textwidth]{./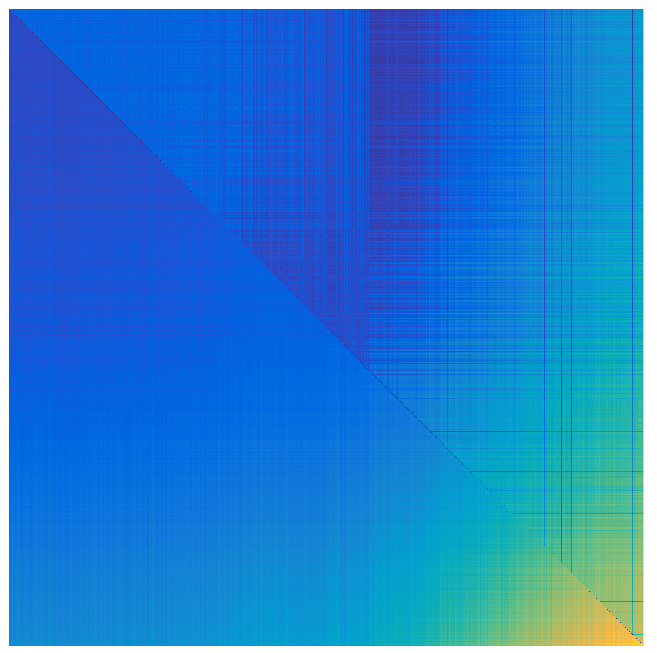} 
	\includegraphics[width=0.071\textwidth, height=0.11\textheight]{./colorbar.pdf}\\
	\caption{Estimated probability matrices for graphons 1--4, shown in rows 1--4.   Column 1: true $P$ (lower) and our method (upper).  Column 2: Chan \& Airoldi (2014) (lower) and $n^{1/3}$ singular value thresholding (upper).  Column 3:  Block model oracle (lower) and spectral clustering (upper).  Column 4: \citet{chatterjee2014matrix} (lower) and Airoldi's method (upper).}
	\vspace{-2em}
	\label{figure::simulation::benchmark::graphon_1}
\end{figure}

%Figure \ref{figure::simulation::benchmark::graphon_1} shows the results for graphon 1 through 4. 
Grahpon 1 has $K  = \lfloor\log n\rfloor=7$ blocks with  different within-block edge probabilities, which all dominate the low between-block probability.  The best results are obtained by our method, singular value thresholding, spectral clustering, and the oracle stochastic blockmodel approximation, which is expected given that the data are generated from a stochastic block model.  The  oracle uses $n^{1/2}$ blocks rather than the true $K$, and thus makes substantial errors on the block boundaries, but not anywhere else.      %The two sort-and-smooth methods
The method of \citet{chan2014consistent} correctly estimates the main blocks because they have different expected degrees, but suffers from boundary effects due to smoothing over the entire region.   In contrast, our method, which determines smoothing neighborhoods based on similarities of graphon slices,  does not suffer from boundary effects.  \citet{chatterjee2014matrix} does a good job on denser blocks but thresholds away sparser blocks.   Airoldi's method captures tightly connected communities, but does not do as well on weaker communities.

Graphon 2 lacks node degree monotonicity, and thus the method of 
\citet{chan2014consistent} does not work here.  Spectral clustering also performs poorly, likely because it uses too many ($n^{1/2}$) eigenvectors which add noise.  Airoldi's method and the stochastic block model oracle give grainy but reasonable approximations to $P$, and the best results are obtained by our method, \citet{chatterjee2014matrix}, and  singular value thresholding  with $n^{1/3}$ eigenvalues.  The latter two are expected to work well since this is a low-rank matrix.    

Graphon 3 is a diagonal-dominated matrix, and our method is the best among computationally efficient methods.  The method of \citet{chatterjee2014matrix} does not perform well because this is not a low-rank matrix;  spectral clustering, on the other hand, does fine, because there are many non-zero eigenvalues and the $n^{1/2}$ eigenvectors contain enough information.  The $n^{1/3}$ singular value thresholding does better than \citet{chatterjee2014matrix}  and provides a lower-resolution denoising.  Airoldi's method only roughly shows the structure, likely due to the similarity measure it uses.  The method of \citet{chan2014consistent} fails since all node expected degrees are almost the same.

Graphon 4
is difficult to estimate for all methods.  It is full rank, with structure at different scales.  This makes it a difficult setting for low-rank approximations, among which the $n^{1/3}$ singular value thresholding alone uses enough eigenvalues to produce a reasonable result, albeit with boundary effects.
This graphon is not a block matrix, and thus spectral clustering does not perform well.  The expected node degrees are not the same, but their ordering does not match the ordering of the latent node positions, so this graphon is also difficult for the sort-and-smooth method of \citet{chan2014consistent}.   Our method successfully picks up the global structure and the curvature.  While visually it is fairly similar to the result of $n^{1/3}$ singular value thresholding, our method has significantly better errors in Table \ref{table::MSEs}.   
Overall, this example illustrates a limitation of all global methods when there are subtle local differences.  

Table \ref{table::MSEs} shows the mean squared errors and the mean absolute errors of all methods on the four graphons averaged over 2000 replications.  The results generally agree with those shown in the figures.   The few relative discrepancies between RMSE and L1 errors occur when there is a small number of large errors, such as the boundary effects for the oracle for graphon 1, which affect RMSE more than the L1 error.  

For graphon 1, our method and the spectral clustering perform best.   For graphon 2, our method is only outperformed by universal singular value thresholding, whereas $n^{1/3}$ leading eigenvalue thresholding selects fewer eigenvalues than needed.  For graphon 3, our method is comparable to the $n^{1/3}$ leading eigenvalue thresholding, and they are both better than other methods, not counting the oracle.  For graphon 4, our method has shows significant advantage over all other methods except for the oracle.  Thus in all cases, our method shows very competitive performance compared to benchmarks.

\begin{table}[h!]
	\tbl{Root mean squared errors and mean absolute errors with standard errors, all multiplied by $10^2$, averaged over 2000 replications.  The largest relative error is less than $4\%$.}{
		\begin{tabular}{ccccccccc}\smallskip%\hline
			& \multicolumn{2}{c}{Graphon 1} 	& \multicolumn{2}{c}{Graphon 2}	& \multicolumn{2}{c}{Graphon 3} & \multicolumn{2}{c}{Graphon 4}\\%\hline 		
			& RMSE & MAE & RMSE & MAE & RMSE & MAE & RMSE & MAE\\
			Our method	
			&	1$\cdot$92	&	1$\cdot$33	&	3$\cdot$06	&	2$\cdot$25
			&	3$\cdot$00	&	1$\cdot$41	&	3$\cdot$55	&	2$\cdot$76\\
			\citet{chan2014consistent}	
			&	8$\cdot$78	&	3$\cdot$09	&	34$\cdot$17	&	30$\cdot$16
			&	11$\cdot$27	&	8$\cdot$04	&	4$\cdot$46	&	3$\cdot$58\\
			\citet{yang2014nonparametric}	
			&	9$\cdot$56	&	4$\cdot$14	&	34$\cdot$18	&	30$\cdot$19
			&	11$\cdot$47	&	8$\cdot$59	&	5$\cdot$67	&	4$\cdot$87\\
			$n^{1/3}$ singular value	
			&	2$\cdot$99	&	2$\cdot$25	&	4$\cdot$74	&	3$\cdot$59
			&	3$\cdot$16	&	1$\cdot$79	&	5$\cdot$86	&	4$\cdot$33\\
			Blockmodel spectral	
			&	1$\cdot$72	&	0$\cdot$75	&	33$\cdot$06	&	28$\cdot$80
			&	3$\cdot$98	&	1$\cdot$78	&	9$\cdot$08	&	6$\cdot$64\\
			Blockmodel oracle	
			&	5$\cdot$48	&	1$\cdot$42	&	5$\cdot$11	&	3$\cdot$80
			&	1$\cdot$62	&	0$\cdot$75	&	1$\cdot$06	&	0$\cdot$83\\
			\citet{chatterjee2014matrix}	
			&	4$\cdot$09	&	2$\cdot$25	&	1$\cdot$89	&	1$\cdot$47
			&	6$\cdot$39	&	3$\cdot$81	&	5$\cdot$67	&	4$\cdot$87\\
			Airoldi's method	
			&	15$\cdot$94	&	8$\cdot$92	&	15$\cdot$82	&	9$\cdot$23
			&	9$\cdot$40	&	5$\cdot$74	&	4$\cdot$60	&	3$\cdot$16\\
		\end{tabular}
	}
	\label{table::MSEs}
	\begin{tabnote}
		RMSE, root mean squared error; MAE, mean absolute error.
	\end{tabnote}
\end{table}

Overall, the results in this section show that various previously proposed methods can perform very well under their assumptions, which may be monotone degrees or low-rank or an underlying block model, but they fail when these assumptions are not satisfied.  Our method is the only one among those compared that performs well in a large range of scenarios, because it learns the structure from data via neighborhood selection instead of imposing a priori structural assumptions.   The $n^{1/3}$ singular value thresholding method also shows consistent performance across all graphons, although in all cases somewhat worse than ours.  It performs very well in the low-rank case, but if the leading singular values decay slowly, our method performs better.

\section{Application to link prediction} 
\label{section::linkprediction}

Evaluating probability matrix estimation methods on real networks directly is difficult, since the true probability matrix is unknown.  We assess the practical utility of our method by applying it to link prediction, a task that relies on estimating the probability matrix.    Here we think of the true adjacency matrix $A^{\textrm{true}}$ as unobserved, with binary edges drawn independently with probabilities given by $P$, also unobserved.   Instead we observe $A^{\textrm{obs}}_{ij} = M_{ij} A^{\textrm{true}}_{ij}$, where unobserved $M_{ij}$'s are independent Bernoulli$(1-p)$,  and $p$ is unknown.  Therefore $A^{\textrm{obs}}_{ij} = 1$ is always a true edge, but $A^{\textrm{obs}}_{ij} = 0$ could be either a true $0$ or a false negative.  This setting is different from and perhaps more realistic than the link prediction setting in \citet{gao2015optimal}, who assumed that $M_{ij}$'s are observed.  Under their setting, the missing rate $p$ can be estimated by the empirical missing rate $\hat{p}$, and all estimators can be corrected for missingness simply by dividing them by $1-\hat{p}$.

A link prediction method usually outputs a nonnegative score matrix $\hat{A}$, with scores giving the estimated propensity of a node pair to form an edge.   For methods that estimate the probability matrix, $\hat{A}$ can be taken to be $\hat P$;  other link prediction methods construct a binary $\hat A$ by working directly on $A$.    Both types of methods essentially output a ranked list of most likely missing links, useful in practice for follow-up confirmatory analysis.  

We compare various link prediction methods via their receiver operating characteristic curves. For each $t>0$, we define the false positive and the true positive rates by 
\begin{align*}
r_{\textrm{FP}}(t) &= {\sum_{ij}1\left(\hat{A}_{ij} > t, A^{\textrm{true}}_{ij}=0, M_{ij}=0\right)}\big/{\sum_{ij}1\left(A^{\textrm{true}}_{ij}=0, M_{ij}=0\right)}\\
r_{\textrm{TP}}(t) &= {\sum_{ij}1\left(\hat{A}_{ij} > t, A^{\textrm{true}}_{ij}=1, M_{ij}=0\right)}\big/{\sum_{ij}1\left(\hat{A}_{ij}=1, M_{ij}=0 \right)} \ .
\end{align*}
Then by varying $t$ we obtain the receiver operating characteristic curve.   In practice, $t$ is often selected to output a fixed number of most likely links.   

In this section we include three additional benchmark methods that produce score matrices rather than estimated probability matrices.   One standard score is the Jaccard index $\langle A_{i\cdot}, A_{j\cdot} \rangle \big/ \{(\sum_k A_{ik})(\sum_k A_{jk})\}$, see for example \citet{lichtenwalter2010new}.   The method by \cite{zhao2013link} computes scores so that similar node pairs to have similar predicted scores.  The PropFlow algorithm of \citet{lichtenwalter2010new} uses an expected random walk distance between nodes as the score.  

We first compare all methods on simulated networks generated from the graphons in Table \ref{table::fourgraphons}.  We set $n=500$ due to the computational cost of some of the benchmarks,  and set $p=10\%$.  All experiments are repeated 1000 times/ 
Figure 2 in the Supplementary Material shows the receiver operating characteristic curves for four graphons. Most differences between the methods can be inferred from Figure \ref{figure::simulation::benchmark::graphon_1}.  Overall, the methods based on graphon estimation outperform score-based methods.  Our method outperforms all other methods on this task, producing a receiver operating characteristic curve very close to that based on the true probability matrix $P$.

We also applied our method and others to the political blogs network \citep{adamic2005political}. This network consists of 1222 manually labelled blogs, 586 liberal and 636 conservative.  The network clearly shows two communities, with heterogeneous node degrees (there are hubs).    We removed 10\% of edges at random and calculated the receiver operating characteristic curve for predicting the missing links, shown in Figure \ref{figure::linkprediction::polblogs}.     Again, methods based on estimating the probability matrix performed much better than the scoring methods, and our method performs best overall.   Sort and smooth methods slightly outperformed spectral clustering and \citet{chatterjee2014matrix}, perhaps due to the presence of hubs.

\begin{figure}[h!]
	\centering
	%	\vspace{-5em}
	\includegraphics[width=0.45\textwidth]{./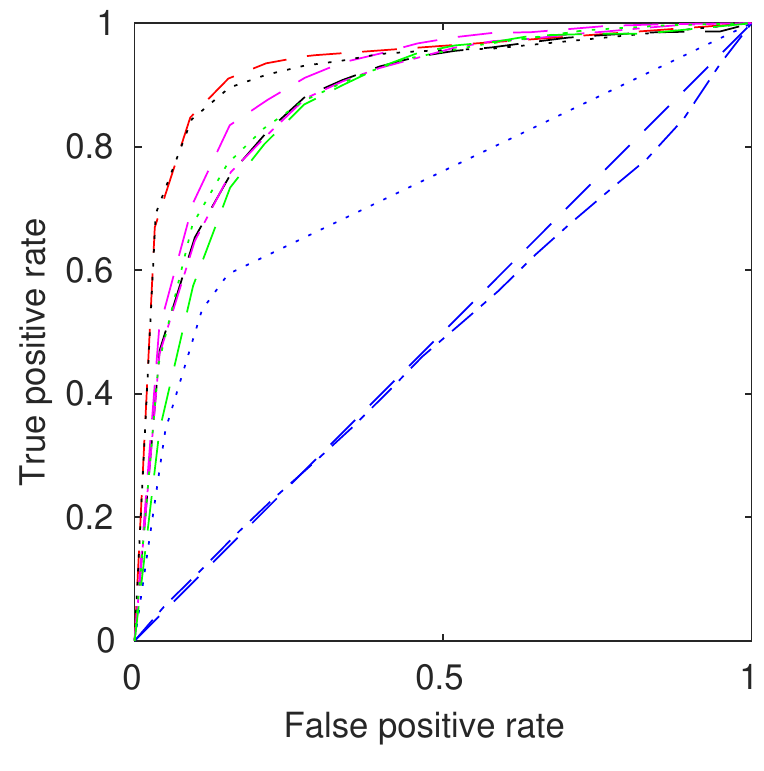}
	\vspace{-1em}
	\caption{Receiver operating characteristic curves for link prediction on the political blogs network. $10\%$ of edges are missing at random. Red dashed curve: our method;  black dotted curve: $n^{1/3}$ singular value thresholding; blue dashed curve: \citet{zhao2013link}; blue dash dotted curve: the Jaccard index; blue dotted curve: PropFlow; black dashed curve: \citet{chatterjee2014matrix}; magenta dashed curve: \citet{chan2014consistent}; magenta dashed dotted curve: \citet{yang2014nonparametric}; green dashed curve: block model with spectral clustering.}
	\label{figure::linkprediction::polblogs}
	%	\vspace{-3em}
\end{figure}

\section{Discussion}\label{section::discussions}

The strength of our method is the adaptive neighborhood choice which works well under many different conditions;  it is also computationally efficient,  easy to implement, and essentially tuning free.  Its main limitation is the piecewise Lipschitz condition, which occasionally leads to over-smoothing.      Our method does not achieve the minimax error rate, and its rate cannot be improved; whether the minimax rate can be achieved by any polynomial time method is, to the best of our knowledge, an open problem.  Another major future challenge is relaxing the assumption of independent edges to better fit real-world networks. 

\section*{Acknowledgments}

The authors thank an associate editor and two anonymous referees for very helpful suggestions, and E. M. Airoldi of Harvard University for sharing a copy of his manuscript and code and great comments.  E.L. is supported by grants NSF DMS 1521551 and ONR N000141612910.  J.Z.  is supported by grants NSF DMS 1407698, KLAS 130026507, and KLAS 130028612.

\section*{Supplementary material}

The supplementary material includes numerical results on the bandwidth constant in Theorem \ref{theorem::errorrate},  $(2,\infty)$-norm errors and comparisons to benchmarks on link prediction for synthetic graphons from Section \ref{section::simulations}, and the proofs of Theorems \ref{theorem::errorrate} and \ref{proposition:sliceminimax} and Proposition \ref{proposition:errorrate}.

\section{Choosing the constant factor for the bandwidth}
%\subsection*{Notation and assumptions for proofs}
\label{section::simulation::tuning}

First, we need to choose the quantile cut-off parameter $h$ which controls neighborhood selection. Theorem \ref{theorem::errorrate} gives the order of $h$, and the following numerical experiments empirically justify our choice of the constant factor.  Figure \ref{figure::tuning} shows the mean squared error curves for networks with $n = 2000$ nodes generated from the four graphons in Table \ref{table::fourgraphons}, with the constant factor $C$ varying in the range $\{2^{-3}, 2^{-2},\ldots,2^3\}$.
\begin{figure}[h]
	\centering
	\vspace{-7em}
	\includegraphics[width=0.6\textwidth]{./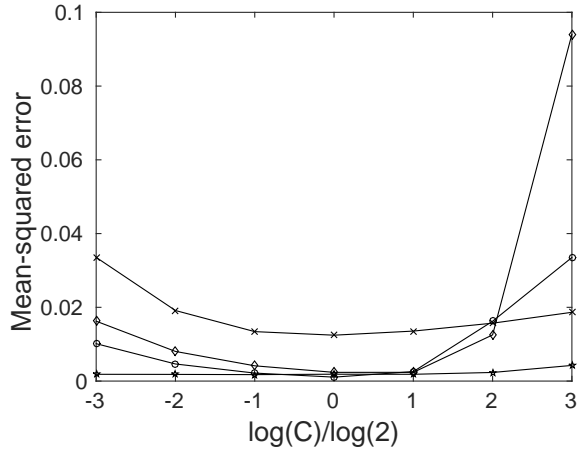}
	\vspace{-7em}
	\caption{Mean squared error of our method as a function of the constant $C$ in the tuning parameter $h = C \theerrorrate$. Graphons 1--4 are marked with a circle, an asterisk, a plus and a cross, respectively.}
	\label{figure::tuning}
\end{figure}

Figure \ref{figure::tuning} demonstrates that $C$ in the range from $2^{-2}$ to $2$ works equally well for all these very different graphons.    This suggests empirically that the method is robust to the choice of $C$, and therefore we set $C=1$ in all numerical results in the paper.

\section{Receiver operating characteristic curves for link prediction simulations in Section \ref{section::linkprediction}}

The results are presented in Figure \ref{figure::linkprediction::simulations}.    The link prediction results generally agree with how the methods performed on the task of estimating the probability matrix for these synthetic graphons, shown in Section 3 of the main manuscript.  

\begin{figure}[h]
	\centering
	%	\vspace{-5em}
	\twoImages {0.45\textwidth}
	{./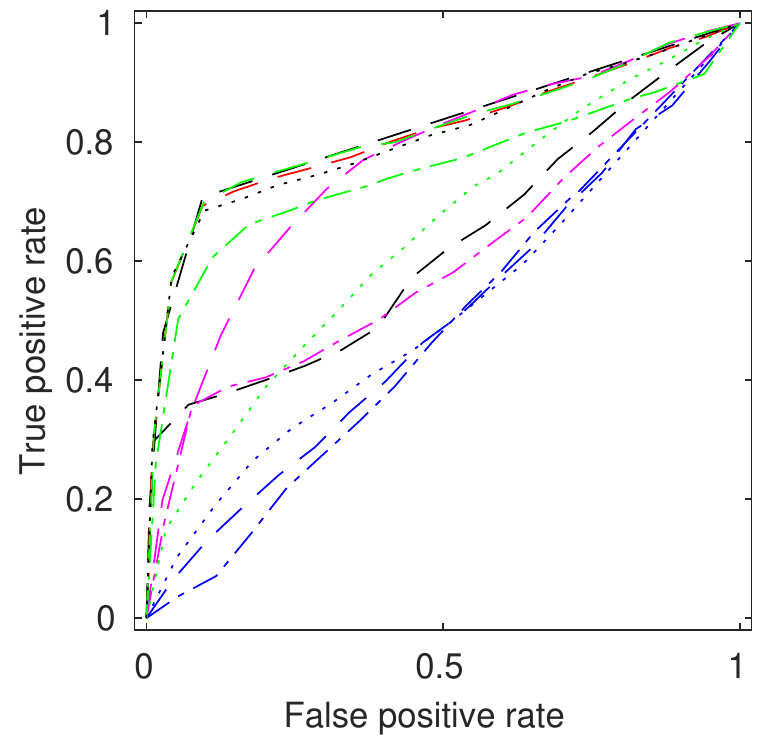}{Graphon 1}
	{./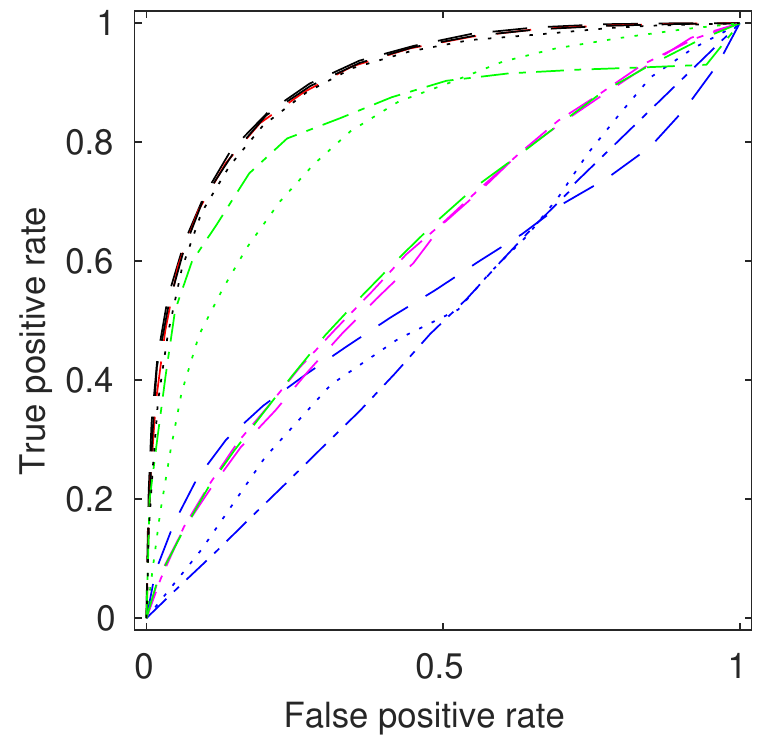}{Graphon 2}
	%	\vspace{-5.5em}
	\twoImages {0.45\textwidth}
	{./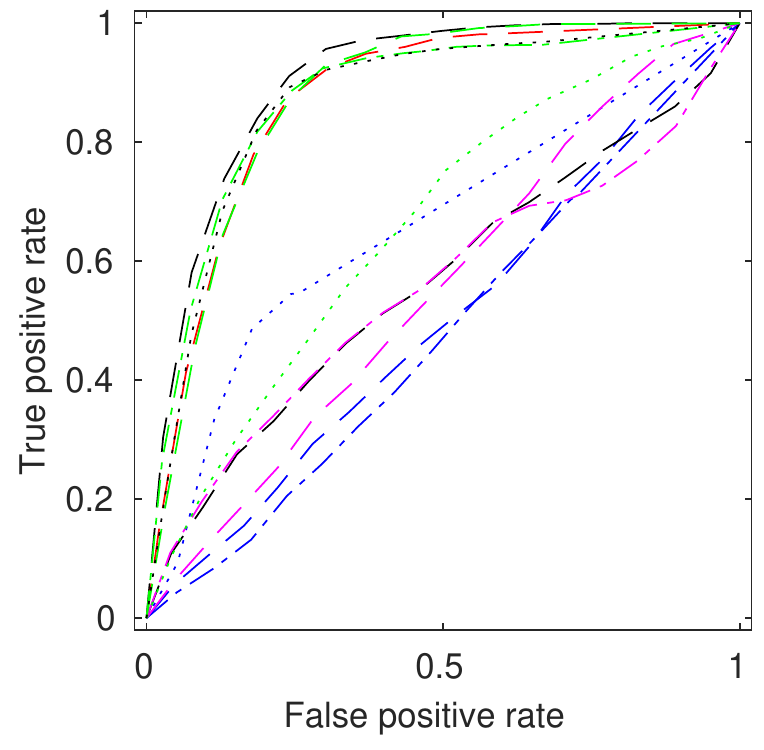}{Graphon 3}
	{./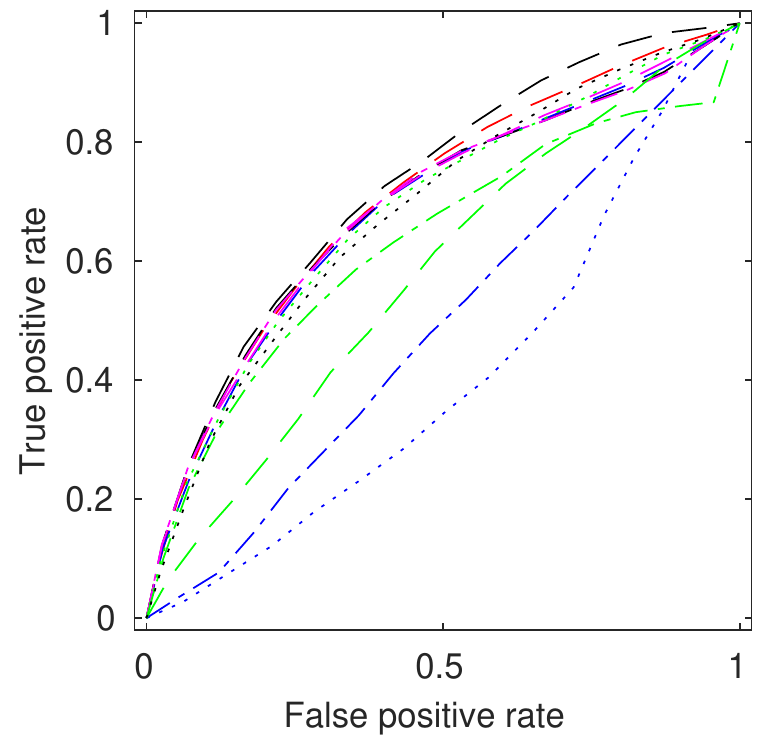}{Graphon 4}
	\caption{%Receiver operating characteristic 
		Receiver operating characteristic curves for link prediction by different methods for graphons 1 to 4.   Black dashed curve: the true probability matrix; red dashed curve: our method;  black dotted curve: $n^{1/3}$ singular value thresholding; blue dashed curve: \citet{zhao2013link}; blue dash dotted curve: the Jaccard index; blue dotted curve: PropFlow; black dashed curve: \citet{chatterjee2014matrix}; magenta dashed curve: \citet{chan2014consistent}; magenta dashed dotted curve: \citet{yang2014nonparametric}; green dashed curve: block model with spectral clustering; green dash dotted curve: oracle block model approximation.
	}\label{figure::linkprediction::simulations} 
\end{figure}

\section{Root mean squared $2,\infty$ errors for simulations in Section \ref{section::simulations}}

In Table \ref{table::MStwoinftys}, we report $\|\hat{P}-P\|_{2,\infty}/n^{1/2}$ for $\hat{P}$ estimated by all methods in Section \ref{section::simulations}.  The performances of our method is slightly better than but comparable to $n^{1/3}$ singular value thresholding and \citet{chatterjee2014matrix}, and these three methods are generally significantly better than other practical methods.  These results suggest there may be a $(2,\infty)$ error bound applicable to the low rank methods, even though they were not designed to control this type of errors;  investigating this is outside the scope of this manuscript.   

\begin{table}[h!]
	\caption{Root mean squared errors in the $(2,\infty)$ norm, followed by standard errors in parenthesis, all multiplied by $10^2$, averaged over 2000 replications.}
	\begin{tabular}{ccccc}\smallskip%\hline
		& Graphon 1 	& Graphon 2	& Graphon 3 & Graphon 4\\%\hline 		
		Our method      &       4$\cdot$84(0$\cdot$12)      &       5$\cdot$97(0$\cdot$08)      &       5$\cdot$60(0$\cdot$11)      &       7$\cdot$13(0$\cdot$11)\\
		\citet{chan2014consistent}      &       38$\cdot$87(0$\cdot$35)     &       50$\cdot$44(0$\cdot$80)     &       13$\cdot$74(0$\cdot$09)     &       9$\cdot$43(0$\cdot$14)\\
		\citet{yang2014nonparametric}   &       38$\cdot$87(0$\cdot$41)     &       49$\cdot$77(0$\cdot$74)     &       13$\cdot$11(0$\cdot$04)     &       7$\cdot$75(0$\cdot$07)\\
		$n^{1/3}$ singular value thresholding   &       6$\cdot$55(0$\cdot$18)      &       8$\cdot$34(0$\cdot$15)      &       5$\cdot$22(0$\cdot$11)      &       13$\cdot$67(0$\cdot$24)\\
		Blockmodel spectral     &       16$\cdot$65(1$\cdot$38)     &       45$\cdot$34(0$\cdot$39)     &       13$\cdot$00(0$\cdot$57)     &       29$\cdot$50(0$\cdot$53)\\
		Blockmodel oracle       &       35$\cdot$36(0$\cdot$04)     &       9$\cdot$47(0$\cdot$02)      &       2$\cdot$53(0$\cdot$02)      &       1$\cdot$73(0$\cdot$03)\\
		\citet{chatterjee2014matrix}    &       8$\cdot$77(0$\cdot$01)      &       3$\cdot$93(0$\cdot$10)      &       8$\cdot$42(0$\cdot$07)      &       7$\cdot$73(0$\cdot$06)\\
		Airoldi's method        &       34$\cdot$74(0$\cdot$46)     &       66$\cdot$97(0$\cdot$12)     &       16$\cdot$79(0$\cdot$07)     &       30$\cdot$47(0$\cdot$74)\\
	\end{tabular}
	
	\label{table::MStwoinftys}
\end{table}

\section{Proofs}
%\subsection*{Notation and assumptions for proofs}
\begin{proof}[of Theorem \ref{theorem::errorrate}]
	For convenience, we start with summarizing notation and assumptions made in the main paper.  Let $0=x_0<x_1<\ldots<x_K=1$, $I_k:=[x_{k-1}, x_k)$ for $1\leq k \leq K-1$ and $I_K=[x_{K-1}, X_K]$. Assume the graphon $f$ is a bi-Lipschitz function on each of  $I_k\times I_\ell$ for $1\leq k,\ell\leq K$.  Let $L$ denote the maximum piece-wise bi-Lipschitz constant.
	\begin{assumption}
		The number of pieces $K$ may grow with $n$, as long as $\min_k |I_k| \big/ \theerrorrate\to \infty$.
	\end{assumption}

	For any $\xi\in[0,1]$, let $I(\xi)$ denote the $I_k$ that contains $\xi$.    Let $S_i(\Delta)=[\xi_i-\Delta,\xi_i+\Delta]\cap I(\xi_i)$ denote the neighborhood of $\xi_i$ in which $f(x, y)$ is Lipschitz in $x\in S_i(\Delta)$ for any fixed $y$. Finally, recall our estimator is defined by 
	{\color{black}
		$$
		\tilde{P}_{ij} = \frac{\sum_{i'\in \calN_i}A_{i'j}}{|\calN_i|}
		$$

		To prove the main theorem, it suffices to show that with high probability
		$$
		\frac{1}{n}\sum_j\left( \tilde{P}_{ij}-P_{ij} \right)^2 \leq \left(\frac{\log n}{n}\right)^{1/2}
		$$
		holds for all $i$.  We begin the proof with the following decomposition of the error term:
		\begin{align*}
		&\frac{1}{n}\sum_j\left( \tilde{P}_{ij}-P_{ij} \right)^2 = \frac{1}{n}\sum_j\left\{ \frac{\sum_{i'\in \calN_i}(A_{i'j}-P_{ij})}{|\calN_i|} \right\}^2\\
		=& \frac{1}{n}\sum_j \left[ \frac{\sum_{i'\in \calN_i}\left\{(A_{i'j} - P_{i'j}) + (P_{i'j}-P_{ij})\right\}}{|\calN_i|} \right]^2 \ . 
		\end{align*}
		
		We can bound the summand by
		\begin{align}
		&\left[ \frac{\sum_{i'\in \calN_i}\left\{(A_{i'j} - P_{i'j}) + (P_{i'j}-P_{ij})\right\}}{|\calN_i|} \right]^2 \nonumber\\
		\leq & 2\left\{\frac{\sumiprime(A_{i'j}-P_{i'j})}{|\calN_i|}\right\}^2 +  2\left\{\frac{\sumiprime(P_{i'j}-P_{ij})}{|\calN_i|}\right\}^2 = 2J_1(i,j) + 2J_2(i,j) \ . 
		\label{proof::furtherdecomposition}
		\end{align}}
	
	Our goal is to bound \tblue{$n^{-1}\sum_i\left\{J_1(i,j)+J_2(i,j)\right\}$}.   First, we prove a lemma which estimates the proportion of nodes in a diminishing neighborhood of $\xi_i$'s.
	\begin{lemma}\label{lemma::neighborhood}
		For arbitrary global constants $C_1$, $\tc_1>0$, define $\Delta_n=\left\{C_1+\left(\tc_1+4\right)^{1/2}\right\}\theerrorrate$.  For $n$ large enough so that 
		$\left\{(\tc_1+4)\log n/n\right\}^{1/2}\leq 1$ and $\Delta_n < \min_k|I_k|/2$,
		we have
		\begin{equation}
		\pr\left\{ \min_i\frac{|\{i'\neq i:\xi_{i'}\in S_i(\Delta_n)\}|}{n-1}\geq C_1\theerrorrate \right\} \geq 1-2n^{-\tc_1/4} . 
		\end{equation}
	\end{lemma}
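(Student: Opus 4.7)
The plan is a direct Binomial concentration argument applied node-by-node, closed off by a union bound over $i$. The core observation is that for each $i$, conditional on $\xi_i$, the count $N_i := |\{i'\neq i : \xi_{i'} \in S_i(\Delta_n)\}|$ is $\mathrm{Binomial}(n-1, p_i)$ with $p_i = |S_i(\Delta_n)|$, so I need two ingredients: a deterministic lower bound $p_i \geq \Delta_n$ uniform in $\xi_i$, and Hoeffding's inequality on the lower tail.

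First I would verify that $p_i \geq \Delta_n$ for every $\xi_i \in [0,1]$. Since $\Delta_n < \min_k |I_k|/2$, the piece $I(\xi_i)$ has length at least $2\Delta_n$, and a short case analysis on the distance from $\xi_i$ to each endpoint of $I(\xi_i)$ gives $|S_i(\Delta_n)| \geq \Delta_n$: away from both endpoints, $S_i(\Delta_n)$ equals the full interval $[\xi_i - \Delta_n, \xi_i + \Delta_n]$ of length $2\Delta_n$; if $\xi_i$ is within $\Delta_n$ of one endpoint of $I(\xi_i)$, the other side still contributes a full $\Delta_n$ while the close side contributes nonnegatively. Conditional on $\xi_i$, the remaining $\xi_{i'}$'s are i.i.d.\ $\mathrm{Uniform}[0,1]$, so $E(N_i \mid \xi_i) \geq (n-1)\Delta_n = (n-1)\{C_1 + (\tc_1 + 4)^{1/2}\}\theerrorrate$.

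Choosing deviation $t_n := (n-1)(\tc_1 + 4)^{1/2}\theerrorrate$ and applying Hoeffding's inequality to the lower tail,
\begin{equation*}
\pr\{N_i \leq (n-1)C_1 \theerrorrate \mid \xi_i\} \leq \exp\left(-\frac{2 t_n^2}{n-1}\right) = \exp\left(-\frac{2(\tc_1+4)(n-1)\log n}{n}\right) \leq n^{-(\tc_1+4)}
\end{equation*}
for all $n \geq 2$, since $2(n-1)/n \geq 1$. The right-hand side does not depend on $\xi_i$, so the bound holds unconditionally, and a union bound over the $n$ values of $i$ yields a failure probability of at most $n \cdot n^{-(\tc_1+4)} = n^{-\tc_1 - 3}$, which is comfortably below the stated $2n^{-\tc_1/4}$ for every $\tc_1 > 0$. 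The only non-routine step is the elementary measure calculation in the second paragraph, which is bookkeeping rather than a real difficulty; the substantial gap between the $n^{-\tc_1 - 3}$ my argument yields and the stated $2n^{-\tc_1/4}$ suggests the authors intentionally kept considerable slack, likely to absorb further low-probability events when the lemma is combined with other bounds in the proof of Theorem \ref{theorem::errorrate}.
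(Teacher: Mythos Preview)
Your proof is correct and follows essentially the same route as the paper: both arguments establish the deterministic lower bound $|S_i(\Delta_n)|\geq \Delta_n$ from the hypothesis $\Delta_n<\min_k|I_k|/2$, apply a Binomial concentration inequality conditional on $\xi_i$, and finish with a union bound over $i$. The only cosmetic difference is that the paper uses a two-sided Bernstein inequality (simplified to exponent $n\epsilon^2/4$) in place of your one-sided Hoeffding bound, which is exactly why your final probability $n^{-\tc_1-3}$ is sharper than the paper's $2n^{-\tc_1/4}$; your reading that the authors left slack is correct.
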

	
	\begin{proof}[of Lemma \ref{lemma::neighborhood}]
		For any $0 < \epsilon \leq 1$ and $n$ large enough to satisfy the assumptions, by Bernstein's inequality we have, for any $i$, 
		\begin{align*}
		\pr\left\{ \left| \frac{|\{i'\neq i:\xi_{i'}\in S_i(\Delta_n)\}|}{n-1} - |S_i(\Delta_n)| \right| \geq \epsilon \right\} & \leq 2\exp\left\{-\frac{\frac{1}{2}(n-1)\epsilon^2}{1+\frac{1}{3}\epsilon}\right\} \leq 2\exp\left(-\frac{1}{4}n\epsilon^2\right) . 
		\end{align*}
		Taking a union bound over all $i$'s gives 
		\begin{equation*}
		\pr\left\{ \max_i \left| \frac{|\{i'\neq i:\xi_{i'}\in S_i(\Delta_n)\}|}{n-1} - |S_i(\Delta_n)| \right| \geq \epsilon \right\} \leq 2n\exp\left(-\frac{1}{4}n\epsilon^2\right) . 
		\end{equation*}	
		Letting $\epsilon=\left\{{(\tc_1+4)\log n}/{n}\right\}^{1/2}$, we have
		\begin{equation}
		\pr\left[ \max_i \left| \frac{|\{i'\neq i:\xi_{i'}\in S_i(\Delta_n)\}|}{n-1} - |S_i(\Delta_n)| \right| \geq \left\{\frac{(\tc_1+4)\log n}{n}\right\}^{1/2} \right] \leq 2n^{-\tc_1/4} .  \label{proof::lemma::proportionconcentration}
		\end{equation}
		
		Next we claim that either $[\xi_i-\Delta_n, \xi_i]\subseteq I(\xi_i)$ or $[\xi_i, \xi_i+\Delta_n]\subseteq I(\xi_i)$ holds for all $i$. If for some $i$ the claim does not hold, by the definition of $I(\xi_i)$, we have $I(\xi_i)\subset [\xi_i-\Delta_n, \xi_i+\Delta_n]$. So we have $|I(\xi_i)|\leq 2\Delta_n$, but this contradicts the condition $\Delta_n<\min_k |I_k|/2$. The claim yields that $|S_i(\Delta_n)|\geq \Delta_n$.
		Finally, by \eqref{proof::lemma::proportionconcentration}, with probability $1-2n^{-\tc_1/4}$, we have
		\begin{align*}
		\min_i\frac{|\{i'\neq i:\xi_{i'}\in S_i(\Delta_n)\}|}{n-1} &  \geq |S_i(\Delta_n)| - \left\{\frac{(\tc_1+4)\log n}{n}\right\}^{1/2} \\
		& \geq \Delta_n - \left\{\frac{(\tc_1+4)\log n}{n}\right\}^{1/2} \geq C_1\theerrorrate \ . 
		\end{align*}
		This completes the proof of Lemma \ref{lemma::neighborhood}.
	\end{proof}
	
	We now continue with the proof of Theorem \ref{theorem::errorrate}. Recall that we defined a measure of closeness of adjacency matrix slices in Section \ref{section::estimation} as 
	\begin{equation*}
	\tilde{d}(i,i') = \max_{k\neq i,i'}\left|\langle A_{i\cdot} - A_{i'\cdot}, A_{k\cdot} \rangle\right|\big/ n = \max_{k\neq i,i'}\left|(A^2/n)_{ik} - (A^2/n)_{jk}\right| \ . 
	\end{equation*}
	The neighborhood $\calN_i$ of node $i$ consists of nodes $(i')$'s with $\tilde{d}(i,i')$  below the $h$-th quantile of $\{\tilde{d}(i, k)\}_{k\neq i}$. 
	The next lemma shows two key properties of $\calN_i$.
	
	\begin{lemma}\label{lemma::propertyNi}
		Suppose we select the neighborhood $\calN_i$ by thresholding at the lower $h$-th quantile of $\{\tilde{d}(i, k)\}_{k\neq i}$, where we set $h=C_0\theerrorrate$ with an arbitrary global constant $C_0$ satisfying $0<C_0\leq C_1$ for the $C_1$ from Lemma \ref{lemma::neighborhood}. Let $C_2, \tc_2>0$ be arbitrary global constants and assume $n\geq 6$ is large enough so that
		\begin{inparaenum}[(i)]
			\item All conditions on $n$ in Lemma \ref{lemma::neighborhood} are satisfied;
			\item $\left\{(C_2+2)\log n/n\right\}^{1/2}\leq 1$;
			\item $C_1\left(n\log n\right)^{1/2}\geq 4$; and
			\item $4/n\leq \left\{\left(C_2+\tc_2+2\right)^{1/2} - \left(C_2+2\right)^{1/2} \right\}\theerrorrate$. \label{Ncondition::eliminate4divn}
		\end{inparaenum}
		Then the neighborhood $\calN_i$ has the following properties:
		\begin{enumerate}
			\item $|\calN_i|\geq C_0\left(n\log n\right)^{1/2}$.
			\item With probability $1-2n^{-\tc_1/4}-2n^{-\tc_2/4}$, for all $i$ and $i'\in\calN_i$, we have
			\begin{equation*}
			\|P_{i'\cdot}-P_{i\cdot}\|_2^2/n\leq \left[ 6L\left\{C_1+\left(\tc_2+4\right)^{1/2}\right\}^{1/2} + 8\left(C_2+\tc_2+2\right)^{1/2}  \right]\theerrorrate
			\end{equation*}
		\end{enumerate}
	\end{lemma}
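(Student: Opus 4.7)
Property 1 will follow at once from the definition: since $\calN_i$ admits every $i'\ne i$ whose score $\tilde d(i,i')$ lies at or below the $h$-th sample quantile with $h=C_0\theerrorrate$, we always have $|\calN_i|\ge \lceil h(n-1)\rceil$, and the technical hypotheses (ii)--(iii) translate this into $|\calN_i|\ge C_0(n\log n)^{1/2}$.

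For Property 2 I plan to work on the intersection of two good events: the event $\Omega_1$ of Lemma~\ref{lemma::neighborhood} and a concentration event $\Omega_2$ that ties $\tilde d^2$ to its population analogue. The first step will be a Bernstein-type concentration. Fixing a triple $(i,i',k)$ with $k\ne i,i'$ and conditioning on the latent positions, the quantity $\langle A_{i\cdot}-A_{i'\cdot}, A_{k\cdot}\rangle/n$ is an average of bounded, almost independent random variables with mean $n^{-1}\sum_j\{f(\xi_i,\xi_j)-f(\xi_{i'},\xi_j)\}\,f(\xi_k,\xi_j)$. Bernstein's inequality with a union bound over the $O(n^3)$ triples, followed by one more Bernstein step to pass from empirical to population inner products, should furnish on an event $\Omega_2$ of probability at least $1-2n^{-\tc_2/4}$ a uniform approximation of the form
\[
\bigl|\tilde d^2(i,i') - \max_{k\ne i,i'}|\langle f(\xi_i,\cdot)-f(\xi_{i'},\cdot), f(\xi_k,\cdot)\rangle|\bigr| \;\le\; 2(C_2+\tc_2+2)^{1/2}\theerrorrate.
\]
The $O(1/n)$ loss caused by the handful of entries shared between $A_{i\cdot}$ and $A_{k\cdot}$ through the symmetry $A_{ij}=A_{ji}$ and through $A_{ii}$ is exactly what hypothesis (iv) is designed to absorb.

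The second step will upper bound $q_i(h)$. On $\Omega_1$ every node $i$ admits at least $C_1(n-1)\theerrorrate\ge C_0(n-1)\theerrorrate$ companions $i^\ast$ with $\xi_{i^\ast}\in S_i(\Delta_n)$. For such $i^\ast$ the piecewise Lipschitz property forces $|f(\xi_i,v)-f(\xi_{i^\ast},v)|\le L\Delta_n$ uniformly in $v$, hence $|\langle f(\xi_i,\cdot)-f(\xi_{i^\ast},\cdot), f(\xi_k,\cdot)\rangle|\le L\Delta_n$ for every $k$. Combined with Step~1 this gives $\tilde d^2(i,i^\ast)\le L\Delta_n + 2(C_2+\tc_2+2)^{1/2}\theerrorrate$, so the lower $h$-th sample quantile $q_i(h)$ is dominated by this value. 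In the third and final step, I will take any $i'\in\calN_i$, exploit $\tilde d^2(i,i')\le q_i(h)$, invoke Step~1 in reverse together with \eqref{equation::controldsquared} to convert this into $d^2(i,i')\le 2L\Delta_n + O\bigl((C_2+\tc_2+2)^{1/2}\theerrorrate\bigr)$, and use one last Bernstein bound to transfer the estimate from $d^2(i,i')$ to its empirical version $\|P_{i\cdot}-P_{i'\cdot}\|_2^2/n$. Collecting constants and substituting $\Delta_n=\{C_1+(\tc_1+4)^{1/2}\}\theerrorrate$ should reproduce the coefficient stated in the lemma.

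The hard part will be Step~1. The $O(n^3)$ triples share the same Bernoulli entries, and the symmetry $A_{ij}=A_{ji}$ couples otherwise independent rows $A_{i\cdot}$ and $A_{k\cdot}$ through the entries at positions $i$ and $k$, so a clean union bound requires either excising these two coordinates (at the cost of an additive $O(1/n)$, absorbed by hypothesis (iv)) or a small conditioning argument that isolates the duplicated entries. Once this concentration is pinned down cleanly, the remainder is bookkeeping: combining the Lipschitz bound on $\Omega_1$ with the concentration bound on $\Omega_2$ and simplifying the resulting universal constants.
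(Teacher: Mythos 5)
Your proposal is sound, and Property 1 is handled the same way as in the paper, but for Property 2 you take a genuinely different route. The paper never passes to population quantities: it works throughout with the entries $(A^2/n)_{ik}$ and $(P^2/n)_{ik}$, proving a single pairwise concentration bound $\max_{i\neq j}|(A^2/n)_{ij}-(P^2/n)_{ij}|\leq\{(C_2+\tc_2+2)\log n/n\}^{1/2}$ (an $O(n^2)$ union bound, with hypothesis (iv) absorbing the diagonal/symmetry corrections), bounds $q_i(h)$ via nodes $\ti$ with $\xi_{\ti}\in S_i(\Delta_n)$ exactly as in your Step 2, and then bounds $\|P_{i\cdot}-P_{i'\cdot}\|_2^2/n$ \emph{directly} by writing it as $(P^2/n)_{ii}-(P^2/n)_{i'i}+(P^2/n)_{i'i'}-(P^2/n)_{ii'}$ and replacing the problematic terms $(P^2/n)_{ii}$, $(P^2/n)_{i'i'}$ by $(P^2/n)_{i\ti}$, $(P^2/n)_{i'\tip}$ with distinct proxies $\ti,\tip$ (this is where condition (iii) guaranteeing at least four nodes in $S_i(\Delta_n)$ enters), at a cost of $4L\Delta_n$, before converting to $A^2$ entries and to $2\tilde d(i,i')$. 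You instead detour through the population graphon-slice inner products and $d^2(i,i')$, essentially making rigorous the heuristic derivation \eqref{equation::controldsquared} of Section 2: this is valid, since \eqref{equation::controldsquared} performs the same proxy trick at the population level and the final transfer from $d^2(i,i')$ back to $\|P_{i\cdot}-P_{i'\cdot}\|_2^2/n$ is a clean concentration of an i.i.d.\ average not involving $A$. What your route costs is bookkeeping: two extra Bernstein layers (empirical-to-population inner products over $O(n^3)$ triples, and the final distance transfer over pairs) generate additional failure events and additional $O(\theerrorrate)$ terms, so the probability $1-2n^{-\tc_1/4}-2n^{-\tc_2/4}$ and the exact coefficient in the lemma would only be recovered after folding these events into the $\tc_2$ term and renaming constants; also your triple-indexed union bound is heavier than the paper's pairwise one, though harmless at this rate. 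What it buys is conceptual transparency, since the proof then literally follows the motivation given for the estimator $\tilde d$ in Section 2.2, whereas the paper's argument is shorter and keeps all quantities at the finite-$n$ level. One small point to make explicit in your Step 3: the proxies $\ti,\tip$ used in \eqref{equation::controldsquared} must be distinct from $i$ and $i'$ (and from each other), which is exactly what hypothesis (iii) is there to guarantee.
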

	\begin{proof}[of Lemma \ref{lemma::propertyNi}]
		The first claim follows immediately from the choice of $h$ and  the definition of $\calN_i$.   % Since the proportion of nodes admitted to $\calN_i$ is $C_0\theerrorrate$, we have $|\calN_i|\geq n\cdot C_0\theerrorrate = C_0\left(n\log n\right)^{1/2}$.   
		To show the second claim, we start with concentration results. For any $i, j$ such that $i\neq j$, we have
		\begin{align}
		&\left|\left(A^2/n\right)_{ij}-\left(P^2/n\right)_{ij}\right|=\left| \sum_k \left(A_{ik}A_{kj} - P_{ik}P_{kj}\right) \right|\Big/n\nonumber\\
		\leq & \frac{|\sum_{k\neq i,j}(A_{ik}A_{kj} - P_{ik}P_{kj})|}{n-2}\cdot\frac{n-2}{n}  +  \frac{|(A_{ii}+A_{jj})A_{ij}| + |(P_{ii}+P_{jj})P_{ij}|}{n}\nonumber\\
		\leq & \frac{|\sum_{k\neq i,j}(A_{ik}A_{kj} - P_{ik}P_{kj})|}{n-2} + \frac{4}{n} \label{proof::lemma::elementwiseconcentration::1}
		\end{align}
		By Bernstein's inequality, for any $0 < \epsilon \leq 1$  and $n\geq 3$ we have
		\begin{equation*}
		\pr\left\{  \frac{|\sum_{k\neq i,j}(A_{ik}A_{kj} - P_{ik}P_{kj})|}{n-2} \geq \epsilon  \right\} \leq 2\exp\left\{ -\frac{\frac{1}{2}(n-2)\epsilon^2}{1+\frac{1}{3}\epsilon} \right\}\leq 2\exp\left(-\frac{1}{4}n\epsilon^2\right)  . 
		\end{equation*}
		Taking a union bound over all $i\neq j$, we have
		\begin{equation*}
		\pr\left\{  \max_{i, j: i\neq j}\frac{|\sum_{k\neq i,j}(A_{ik}A_{kj} - P_{ik}P_{kj})|}{n-2} \geq \epsilon  \right\} \leq 2n^2\exp\left(-\frac{1}{4}n\epsilon^2\right)  . 
		\end{equation*}
		Then setting $\epsilon = \left\{(C_2+2)\log n/n\right\}^{1/2}$ with $n$ large enough so that $\epsilon\leq 1$, we have
		\begin{equation}
		\pr\left\{  \max_{i,j:i\neq j}\frac{|\sum_{k\neq i,j}(A_{ik}A_{kj} - P_{ik}P_{kj})|}{n-2} \geq   \left\{\frac{(C_2+2)\log n}{n}\right\}^{1/2}\right\} \leq 2n^{-\tc_2/4}  \label{proof::lemma::elementwiseconcentration::2}
		\end{equation}
		Combining \eqref{proof::lemma::elementwiseconcentration::1} and \eqref{proof::lemma::elementwiseconcentration::2}, with probability $1-2n^{-\tc_2/4}$, the following holds
		\begin{equation}
		\max_{i,j:i\neq j}\left|\left(A^2/n\right)_{ij}-\left(P^2/n\right)_{ij}\right|  \leq   \left\{\frac{(C_2+2)\log n}{n}\right\}^{1/2} + \frac{4}{n}\leq \left\{\frac{(C_2+\tc_2+2)\log n}{n}\right\}^{1/2} \label{proof::lemma::elementwiseconcentrationA2}
		\end{equation}
		for $n$ large enoug to satisfy \eqref{Ncondition::eliminate4divn}. 
		Next, we prove a useful inequality.   For all $i$ and any $\ti$ such that $\xi_{\ti}\in S_i(\Delta_n)$, we have
		\begin{equation}
		\left|\left(P^2/n\right)_{ik}-\left(P^2/n\right)_{\ti k}\right| = |\langle P_{i\cdot}, P_{k\cdot} \rangle  -  \langle P_{\ti\cdot}, P_{k\cdot} \rangle|/n\leq \|P_{i\cdot} - P_{\ti\cdot}\|_2\|P_{k\cdot}\|_2/n\leq L\Delta_n  \label{proof::lemma::Papproximation}
		\end{equation}
		for all $k$, where the last inequality follows from 
		$$|P_{i'\ell} - P_{i\ell}| = |f(\xi_{i'}, \xi_\ell) - f(\xi_i,\xi_\ell)|\leq L|\xi_{i'} - \xi_i|\leq L\Delta_n$$
		for all $\ell$, and $\|P_{k\cdot}\|_2\leq n^{1/2}$ for all $k$. Note that this holds for all $k$, including $k = i$ or $k = \ti$.
		
		We are now ready to upper bound $\tilde{d}(i, i')$ for $i'\in\calN_i$. We bound $\tilde{d}(i, i')$ via bounding $\tilde{d}(i, \ti)$ for $\ti$ with $\xi_{\ti}\in S_i(\Delta_n)$. By \eqref{proof::lemma::elementwiseconcentrationA2} and \eqref{proof::lemma::Papproximation}, with probability $1-2n^{-\tc_2/4}$, we have
		\begin{align}
		&\tilde{d}(i, \ti) =\max_{k\neq i, \ti}|(A^2/n)_{ik} - (A^2/n)_{\ti k}| \nonumber\\
		\leq& \max_{k\neq i, \ti}|(P^2/n)_{ik} - (P^2/n)_{\ti k}| + 2\max_{i, j: i\neq j}|(A^2/n)_{ij} - (P^2/n)_{ij}|\nonumber\\
		\leq&  L\Delta_n+2\left\{\frac{(C_2+\tc_2+2)\log n}{n}\right\}^{1/2}  \label{proof::bounddtilda1}
		\end{align}
		
		Now since the fraction of nodes contained in $\left|\{ \ti:\xi_{\ti}\in S_i(\Delta_n) \}\right|$ is at least $h$, this upper bounds $\tilde{d}(i,i')$ for $i'\in\calN_i$, since nodes in $\calN_i$   have the lowest $h$ fraction of values in $\{\tilde{d}(i,k)\}_k$.	Setting $\Delta_n$ as in Lemma \ref{lemma::neighborhood}, by Lemma \ref{lemma::neighborhood} and \eqref{proof::lemma::elementwiseconcentrationA2}, with probability $1-2n^{-\tc_1/4}-2n^{-\tc_2/4}$, for all $i$, at least $C_1\theerrorrate$ fraction of nodes $\ti\neq i$ satisfy both $\xi_{\ti}\in S_i(\Delta_n)$ and
		\begin{equation}
		\tilde{d}(i,\ti)\leq L\Delta_n+2\left\{\frac{(C_2+\tc_2+2)\log n}{n}\right\}^{1/2} \ .  \label{proof::bounddtilda2}
		\end{equation}
		Recall that $i' \in \calN_i$  have the smallest $h=C_0\theerrorrate\leq C_1\theerrorrate$ fraction of $\tilde{d}(i,i')$'s. Then \eqref{proof::bounddtilda2} yields that
		%. Also, it is ensured by \eqref{proof::bounddtilda2} that the lower-$h$ quantile of $\tilde{d}(i, i')$'s is upper-bounded by the quantity in \eqref{proof::bounddtilda2}, which means all $(i')$'s admitted into $\calN_i$ are also upper bounded by the same quantity. Formally, with probability $1-2n^{-\tc_1/4}-2n^{-\tc_2/4}$, we have
		\begin{equation}
		\tilde{d}(i,i')\leq L\Delta_n+2\left\{\frac{(C_2+\tc_2+2)\log n}{n}\right\}^{1/2}  \label{proof::lemma::boundtildedfinal}
		\end{equation}
		holds for all $i$ and all $i'\in\calN_i$ simultaneously with probability $1-2n^{-\tc_1/4}-2n^{-\tc_2/4}$.
		
		We are now ready to complete the proof of the second claim of Lemma \ref{lemma::propertyNi}. By Lemma \ref{lemma::neighborhood}, \eqref{proof::lemma::elementwiseconcentrationA2}, \eqref{proof::lemma::Papproximation} and \eqref{proof::lemma::boundtildedfinal}, with probability $1-2n^{-\tc_1/4}-2n^{-\tc_2/4}$, the following holds. For $n$ large enough such that $\min_i\left|\left\{ i':\xi_{i'}\in S_i(\Delta_n) \right\}\right|\geq C_1\left(n\log n\right)^{1/2}\geq 4$ (by Lemma  \ref{lemma::neighborhood}), for all $i$ and $i'\in\calN_i$ we can find $\ti\in S_i(\Delta_n)$ and $\tip\in S_{i'}(\Delta_n)$ such that $i$, $i'$, $\ti$ and $\tip$ are different from each other. Then we have
		\begin{align*}
		&\|P_{i\cdot} - P_{i'\cdot}\|_2^2/n = (P^2/n)_{ii}-(P^2/n)_{i'i} + (P^2/n)_{i'i'} - (P^2/n)_{ii'}\\
		\leq & \left|(P^2/n)_{ii}-(P^2/n)_{i'i}\right| + \left|(P^2/n)_{i'i'} - (P^2/n)_{ii'}\right|\\
		\leq & \left|(P^2/n)_{i\ti}-(P^2/n)_{i'\ti}\right| + \left|(P^2/n)_{i'\tip}-(P^2/n)_{i\tip}\right| + 4L\Delta_n\\
		\leq & \left|(A^2/n)_{i\ti}-(A^2/n)_{i'\ti}\right| + \left|(A^2/n)_{i'\tip}-(A^2/n)_{i\tip}\right| + 4\left\{\frac{(C_2+\tc_2+2)\log n}{n}\right\}^{1/2} + 4L\Delta_n\\
		\leq & 2\max_{k\neq i, i'}\left|(A^2/n)_{ik} - (A^2/n)_{i'k}\right|  + 4\left\{\frac{(C_2+\tc_2+2)\log n}{n}\right\}^{1/2} + 4L\Delta_n\\
		= & 2\tilde{d}(i, i') + 4\left\{\frac{(C_2+\tc_2+2)\log n}{n}\right\}^{1/2} + 4L\Delta_n \leq 8\left\{\frac{(C_2+\tc_2+2)\log n}{n}\right\}^{1/2} + 6L\Delta_n\\
		= & \left[ 6L\left\{C_1+\left(\tc_2+4\right)^{1/2}\right\}^{1/2} + 8\left(C_2+\tc_2+2\right)^{1/2}  \right]\theerrorrate \ .
		\end{align*}
		This completes the proof of Lemma \ref{lemma::propertyNi}.
	\end{proof}
	
	We are now ready to bound \tblue{$n^{-1}\sum_{j}\{J_1(i,j)+J_2(i,j)\}$}, which will complete the proof of Theorem \ref{theorem::errorrate}. Note that we cannot simply bound each individual $J_1(i,j)$'s by Bernstein's inequality since $A_{i'j}$ is not independent of the event $i'\in\calN_i$.   Instead, we work with the sum $n^{-1}\sum_j J_1(i, j)$ and decompose it as follows.
	\begin{align}
	&\frac{1}{n}\sum_j J_1(i, j) = \frac{1}{n|\calN_i|^2}\sum_j\left\{ \sumiprime (A_{i'j} - P_{i'j}) \right\}^2\nonumber\\
	= & \frac{1}{n|\calN_i|^2}\sum_j\left\{ \sumiprime (A_{i'j} - P_{i'j})^2 + \sumiprime\sum_{i''\neq i', i''\in\calN_i}(A_{i'j}-P_{i'j})(A_{i''j}-P_{i''j}) \right\} . 
	\label{proof::decomposeJ1}
	%	\nonumber\\
	%	\leq & \frac{1}{|\calN_i|^2}\sumiprime\left\{  \|A_{i'\cdot} - P_{i\cdot}\|_2^2/n  +  \sum_{i''\neq i'}\sum_j(A_{i'j}-P_{i'j})(A_{i''j}-P_{i''j})/n  \right\}\nonumber\\
	%	\leq & \frac{1}{|\calN_i|^2}\sumiprime\left\{ 1 + \sum_{i''\neq i'}\sum_j(A_{i'j}-P_{i'j})(A_{i''j}-P_{i''j})/n \right\} \label{proof::decomposeJ1}
	\end{align}
	The first term in \eqref{proof::decomposeJ1} satisfies 
	\begin{align}
	\sum_j (A_{i'j} - P_{i'j})^2/n &= \|A_{i'\cdot} - P_{i'\cdot}\|_2^2/n \leq 1  \label{proof::boundJ1further1}
	\end{align}
	where the inequality is due to $|A_{i'j}-P_{i'j}|\leq 1$ for all $j$. The second term in \eqref{proof::decomposeJ1} can be bounded by
	\begin{align}
	\frac{1}{n|\calN_i|^2} & \sum_j \sumiprime\sum_{i''\neq i', i''\in\calN_i}(A_{i'j}-P_{i'j})(A_{i''j}-P_{i''j}) \leq  \nonumber \\
	\leq & \frac{1}{|\calN_i|^2} \sum_{i',i''\in\calN_i: i'\neq i''} \left| \frac{1}{n}\sum_j(A_{i'j}-P_{i'j})(A_{i''j}-P_{i''j}) \right|\nonumber\\
	\leq &  \frac{1}{|\calN_i|^2} \sum_{i',i''\in\calN_i: i'\neq i''} \Bigg\{ \frac{1}{n-2}\left|\sum_{j\neq i', i''}(A_{i'j}-P_{i'j})(A_{i''j}-P_{i''j})\right|\cdot\frac{n-2}{n}  \nonumber\\
	& +  \frac{\left|(A_{i'i''}-P_{i'i''})\right|\left| (A_{i'i'}-P_{i'i'}+A_{i''i''}-P_{i''i''}) \right|}{n} \Bigg\}\nonumber\\
	\leq & \frac{1}{|\calN_i|^2} \sum_{i',i''\in\calN_i: i'\neq i''} \left\{  \frac{1}{n-2}\left|\sum_{j\neq i', i''}(A_{i'j}-P_{i'j})(A_{i''j}-P_{i''j})\right| + \frac{2}{n}  \right\} . 
	\label{proof::boundJ1further2}
	\end{align}
	To bound the first term in \eqref{proof::boundJ1further2}, for any $i_1\neq i_2$ and $\epsilon>0$, by Bernstein's inequality we have
	\begin{align*}
	\pr\left\{ \frac{1}{n-2}\left| \sum_{j\neq i_1, i_2} \left( A_{i_1 j} - P_{i_1 j} \right) \left( A_{i_2 j} - P_{i_2 j} \right) \right| \geq \epsilon \right\} 	\leq 2\exp\left\{ -\frac{\frac{1}{2}(n-2)\epsilon^2}{1+\frac{1}{3}\epsilon} \right\} \leq 2n^2 e^{-n\epsilon^2/4} . 
	\end{align*}
	Let $C_3, \tc_3>0$ be arbitrary global constants and let $n$ be large enough so that $1/\{C_0\left(n\log n\right)^{1/2}\} + 2/n\leq \left\{ \left(C_3+\tc_3+8\right)^{1/2} - \left(C_3+8\right)^{1/2} \right\}\theerrorrate$. First, taking $\epsilon = \left\{(C_3+8)\log n/n\right\}^{1/2}$ and a union bound over all  $i_1\neq i_2$, we have
	\begin{equation}
	\pr\left[ \max_{i_1, i_2, i_1\neq i_2}\frac{1}{n-2}\left| \sum_{j\neq i_1, i_2} \left( A_{i_1 j} - P_{i_1 j} \right) \left( A_{i_2 j} - P_{i_2 j} \right) \right| \geq \left\{\frac{(C_3+8)\log n}{n}\right\}^{1/2} \right] \leq 2n^{-\tc_3/4} . 
	\label{proof::boundtermJ1}
	\end{equation}
	Then plugging \eqref{proof::boundJ1further1}, \eqref{proof::boundJ1further2} and \eqref{proof::boundtermJ1} into \eqref{proof::decomposeJ1} and combining with claim 1 of Lemma \ref{lemma::propertyNi}, with probability $1-2n^{-\tc_1/4}-2n^{\tc_2/4}-2n^{-\tc_3/4}$, for all $i$ simultaneously, we have
	\begin{align}
	&\frac{1}{n}\sum_j J_1(i, j) \leq \frac{1}{|\calN_i|^2}\sumiprime\left[ 1 + (|\calN_i|-1)\left(\left\{\frac{(C_3+8)\log n}{n}\right\}^{1/2}  + \frac{2}{n}\right)\right] \nonumber\\
	\leq & \frac{1}{|\calN_i|} + \left\{\frac{(8+C_3)\log n}{n}\right\}^{1/2} + \frac{2}{n} \leq \frac{1}{C_0\left(n\log n\right)^{1/2}} + \frac{2}{n} + \left\{\frac{(C_3+8)\log n}{n}\right\}^{1/2}\nonumber\\
	\leq& \left\{\frac{(C_3+\tc_3+8)\log n}{n}\right\}^{1/2} . 
	\label{proof::boundJ1final}
	\end{align}
	
	We now bound \tblue{$n^{-1}\sum_{j}J_2(i,j)$}. By Lemma \ref{lemma::propertyNi}, with probability $1-2n^{-\tc_1/4}-2n^{\tc_2/4}$, \tblue{for all $i$ simultaneously,} we have
	{\color{black}\begin{align}
		&\frac{1}{n}\sum_{j}J_2(i,j)=\frac{1}{n}\sum_j\left\{ \frac{\sumiprime(P_{i'j}-P_{ij})}{|\calN_i|} \right\}^2\nonumber\\
		\leq& \frac{\sumiprime \sum_j (P_{i'j}-P_{ij})^2/n}{|\calN_i|} = \frac{\sumiprime \|P_{i'\cdot} - P_{i\cdot}\|_2^2/n}{|\calN_i|}\nonumber\\
		\leq& \left[ 6L\left\{C_1+\left(\tc_2+4\right)^{1/2}\right\}^{1/2} + 8\left(C_2+\tc_2+2\right)^{1/2}  \right]\theerrorrate \ , 
		\label{proof::boundJ2final}
		\end{align}}
	where the first inequality is the Cauchy-Schwartz inequality and the second inequality follows from claim 2 of Lemma \ref{lemma::propertyNi}.
	
	Combining \eqref{proof::boundJ1final} and \eqref{proof::boundJ2final} completes the proof of Theorem \ref{theorem::errorrate}.
	
\end{proof}

\begin{proof}[of Proposition \ref{proposition:errorrate}]
	We only need to prove the upper bound on the error rate;  the lower bound follows by setting $\delta=1$ and applying \tblue{Theorem 2.3} in \citet{gao2014rate}.   We show that the least squares estimator defined in (2.4) of \citet{gao2014rate}, \tblue{which we shall denote as $\hat{P}_{\textrm{LS}}$ here,} has the same error rate for graphons in our family ${\cal F}_{\delta, L}$.  The proof is almost identical to the proof of \tblue{Theorem 2.3} in \citet{gao2014rate}, except we need to choose a $\theta^*$ that respects the partition of $[0,1]$ into intervals of continuity with a non-essential adaptation of their Lemma 2.1. Referring to the proof of Lemma 2.1, instead of choosing $z^*$ using $U_a=\left[ (a-1)/k, a/k \right)$, we now set
	$$
	(z^*)^{-1}(a) = \{ i\in[n]: \xi_i\in \tilde{U}_a \}
	$$
	where $\tilde{U}_a$'s are defined as follows.  Recall Definition 2 of our paper which specifies the sequence $x_0, \ldots, x_K$.   We set $\delta =\min_{0\leq s\leq K-1}(x_{s+1}-x_s) \succ \theerrorrate$ and consequently $K\prec (n/\log n)^{1/2}$. Like in \citet{gao2014rate}, we use a stochastic block model with $k =n^{1/2}$ equal-sized communities to approximate the true probability matrix. This corresponds to using a piece-wise constant graphon function, with pieces of equal size $1/k=n^{-1/2}$ to approximate the true graphon. \tblue{Thus for large enough $n$, we have the following properties: at most one $x_i$, $1\leq i\leq K-1$ may fall in any $U_a$ for $1\leq a\leq k$; no $x_i$ may fall in $U_1$, $U_2$, $U_{n-1}$ or $U_k$; and if an $x_i$ falls in $U_a$, no $x_i$ will fall in $U_{a-2}$, $U_{a-1}$, $U_{a+1}$ or $U_{a+2}$, because for large enough $n$, we have}
	{\color{black}\begin{align*}
		x_i-x_{i-1}&\geq \delta\asymp (n^{-1}\log n)^{1/2}\succ n^{-1/2}\asymp 3/k =  |U_{a-2} \cup U_{a-1} \cup U_a| \ , \\
		x_{i+1}-x_i&\geq \delta\asymp (n^{-1}\log n)^{1/2}\succ n^{-1/2}\asymp 3/k =  |U_a \cup U_{a+1} \cup U_{a+2}|\ .
		\end{align*}
		Then we define $\tilde{U}$ as follows.  First, we set $\tilde{U}_1=U_1$ and $\tilde{U}_n=U_n$.  For all $2\leq a\leq n-1$ such that no $x_i$, $1\leq i\leq K-1$ falls in any of $U_{a-1}$, $U_a$ and $U_{a+1}$, we let $\tilde{U}_a=U_a$.  Lastly for all $2\leq a\leq n-1$ such that an $x_i$, $1\leq i\leq K-1$ falls in $U_a$, if $x_1\leq (2a-1)/(2k)$, we set $\{ \tilde{U}_{a-1}, \tilde{U}_a, \tilde{U}_{a+1} \} = \{ U_{a-1}\cup \left[ (a-1)/k, x_i \right), \left[ x_i, a/k \right), U_{a+1} \}$, and otherwise we set $\{ \tilde{U}_{a-1}, \tilde{U}_a, \tilde{U}_{a+1} \} = \{ U_{a-1}, \left[ (a-1)/k, x_i \right), \left[ x_i, a/k \right)\cup U_{a+1} \}$.
		
		Let $P^*$ denote the probability matrix for the stochastic block model approximation to the true probability matrix $P$ corresponding to the partition $\tilde{U}$.  That is, for all $i, j$ such that $\xi_i\in \tilde{U}_a$ and $\xi_j\in \tilde{U}_b$, define:
		$$
		P^*_{ij} = \frac{1}{|\{i': \xi_{i'}\in \tilde{U}_a\}| |\{j': \xi_{j'}\in \tilde{U}_b\}|}\sum_{i'\in \tilde{U}_a, j'\in\tilde{U}_b} f(\xi_{i'}, \xi_{j'})
		$$
		We can apply the reasoning in the proof of Lemma 2.1 in \citet{gao2014rate}, which yields:
		\begin{align}
		\left| P^*_{ij} - P_{ij} \right| &\leq \frac{1}{|\{i': \xi_{i'}\in \tilde{U}_a\}| |\{j': \xi_{j'}\in \tilde{U}_b\}|}\sum_{i'\in \tilde{U}_a, j'\in\tilde{U}_b} \left| f(\xi_{i'}, \xi_{j'}) - f(\xi_i,\xi_j)\right|\nonumber\\
		&\leq \frac{1}{|\{i': \xi_{i'}\in \tilde{U}_a\}| |\{j': \xi_{j'}\in \tilde{U}_b\}|}\sum_{i'\in \tilde{U}_a, j'\in\tilde{U}_b} L\left( |\xi_i-\xi_{i'}| + |\xi_j-\xi_{j'}| \right)\nonumber\\
		&\leq 4L/k\asymp 1/k=n^{-1/2} \label{Lemma::new_lemma_2.1} \ .
		\end{align}
		We can then apply the argument in Equations (4.1) through (4.5) in \citet{gao2014rate} and combine our result \eqref{Lemma::new_lemma_2.1} with Lemmas 4.1, 4.3 and 4.4 in \citet{gao2014rate} to obtain the following result:
		$$
		\|\hat{P}_{\textrm{LS}}-P\|_F^2 \leq 2\|\hat{P}_{\textrm{LS}} - P^*\|_F^2 + 2\|P^*-P\|_F^2 \leq C(k^2+k\log n+ n^2k^{-2}) \leq C' n\log n \ ,
		$$
		where we plugged in $k=n^{1/2}$.
	}
\end{proof}

\begin{proof}[of Theorem \ref{proposition:sliceminimax}]
	{\color{black}In this proof, we first construct a ``baseline'' network to be a stochastic block model with $K=3$ communities, which we label $0$, $1$, and $2$, of sizes $m$, $\ell$, and $\ell$, respectively.  Thus $n=m+2\ell$.  %Without loss of generality, we let the first $m$ nodes belong to community 0, the next $\ell$ nodes to community 1, and the rest to community 2. 
		Letting $Z^{(0)}\in\{0, 1\}^{n\times K}$ denote the membership matrix of the $n$ nodes, we can, without loss of generality, set $Z_{i1}=1$ for $1\leq i\leq m$, $Z_{i2}=1$ for $m+1\leq i\leq m+\ell$ and $Z_{i3}=1$ for $m+\ell+1\leq i\leq n$.  For all other $(i, k)$, we set $Z_{ik}=0$.  Next, we define the $K \times K$ probability matrix $B$ as:
		$$
		B = \begin{pmatrix}
		1/2 & 1/2+\phi & 1/2\\
		1/2+\phi & 1/2 & 1/2\\
		1/2 & 1/2 & 1/2
		\end{pmatrix}
		$$
		where $\phi$ is a small positive value to be determined later.  The probability matrix  is then $EA = P = ZBZ^T$.
		
		We then construct $N$ probability matrices ${\cal P} = \{P^{(1)},\ldots,P^{(N)}\}$, where $N$ is a natural number to be determined later.  Each $P^{(i)}$ is a stochastic block model with $K=3$ communities,its own community membership assignment, and the same $B$ as the baseline network.  That is, $P^{(i)}=Z^{(i)}B\left\{Z^{(i)}\right\}^T$.
		
		Now we construct $Z^{(i)}$.  Without loss of generality, we assume $\ell$ is an even number.  Otherwise, we can switch one node from communities $1$ and $2$ each to community $0$, and this will not change the lower bound on the error rate we are going to establish.  To proceed, we use the Gilbert-Varshamov bound, which was also used by \citet{gao2014rate} and \citet{klopp2015oracle}, to construct $N$ indicator vectors $w^{(1)},\ldots,w^{(N)}$:. 
		\begin{lemma} \label{lemma::Varshamov_Gilbert}
			For any positive integer $\ell$, there exists $\{ w^{(1)},\ldots,w^{(N)} \}$ for some $N\geq \exp(\ell/8)$, where $w^{(i)}\in\{0,1\}^\ell$, such that for any $i\neq j$, we have
			$$
			\|w^{(i)}-w^{(j)}\|_H\geq \ell/4
			$$
			where $\|x-y\|_H = \sum_s 1(x_s\neq y_s)$ is the Hamming distance.
		\end{lemma}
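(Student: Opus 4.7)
The plan is to establish this packing bound by a greedy covering argument in the Hamming cube, which is the classical Gilbert--Varshamov construction. Pick $w^{(1)}\in\{0,1\}^{\ell}$ arbitrarily and at each subsequent step add any $w^{(t+1)}\in\{0,1\}^{\ell}$ whose Hamming distance to all previously chosen codewords is at least $\ell/4$, halting only when no such point remains. Let $N$ denote the number of chosen codewords at termination. By maximality, every point of $\{0,1\}^{\ell}$ lies within Hamming distance strictly less than $\ell/4$ of some $w^{(i)}$, so the closed balls of radius $r:=\lceil \ell/4\rceil-1$ centered at the $w^{(i)}$'s cover $\{0,1\}^{\ell}$, giving
\[
2^{\ell}\;\leq\; N\cdot \sum_{k=0}^{r}\binom{\ell}{k}.
\]

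The next step is to control the Hamming-ball volume by the standard binary-entropy inequality: for $\alpha\in(0,1/2]$,
\[
\sum_{k=0}^{\lfloor \alpha\ell\rfloor}\binom{\ell}{k}\;\leq\; 2^{H(\alpha)\ell},\qquad H(\alpha):=-\alpha\log_2\alpha-(1-\alpha)\log_2(1-\alpha).
\]
This follows from $1=(\alpha+(1-\alpha))^{\ell}\geq \sum_{k\leq \alpha\ell}\binom{\ell}{k}\alpha^{k}(1-\alpha)^{\ell-k}$ together with the observation that, for $\alpha\leq 1/2$ and $k\leq \alpha\ell$, each summand $\alpha^{k}(1-\alpha)^{\ell-k}$ is at least its value at $k=\alpha\ell$, namely $2^{-H(\alpha)\ell}$. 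Since $r\leq \ell/4$, applying this at $\alpha=1/4$ gives $|B(0,r)|\leq 2^{H(1/4)\ell}$ with $H(1/4)=2-\tfrac{3}{4}\log_2 3$.

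Combining the covering bound with the volume bound yields
\[
N\;\geq\; 2^{(1-H(1/4))\ell}\;=\;\exp\bigl((1-H(1/4))\,\ell\log 2\bigr)\;\geq\;\exp(\ell/8),
\]
where the last step is the numerical check $(1-H(1/4))\log 2=(\tfrac{3}{4}\log_2 3-1)\log 2\approx 0.131>0.125=1/8$, valid for every $\ell\geq 1$. This completes the proof. The only real obstacle is verifying this last numerical inequality, which leaves a fixed slack and is therefore routine; the rest of the argument uses only greedy packing plus the elementary entropy volume bound, with no special tools needed.
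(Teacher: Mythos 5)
Your proof is correct. Note, though, that the paper itself does not prove this lemma at all: it is quoted as the standard Gilbert--Varshamov (packing) bound, with the citations to Gao et al. (2015) and Klopp et al. (2015) indicating where the same device is used (those papers, in turn, rely on the classical probabilistic/Hoeffding counting version, e.g.\ Tsybakov's or Massart's formulation, which gives $N\geq e^{\ell/8}$ directly). Your route is the equally classical deterministic one: greedy maximal packing at distance $\ell/4$, the covering inequality $2^{\ell}\leq N\sum_{k\leq r}\binom{\ell}{k}$ with $r=\lceil \ell/4\rceil-1\leq \ell/4$, and the binary-entropy volume bound $\sum_{k\leq \alpha\ell}\binom{\ell}{k}\leq 2^{H(\alpha)\ell}$ at $\alpha=1/4$. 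All steps check out, including the monotonicity argument behind the volume bound and the numerical slack $(1-H(1/4))\log 2\approx 0.131>1/8$, which holds uniformly in $\ell$, so your bound is in fact marginally stronger than the $e^{\ell/8}$ stated. What your approach buys is a short, fully self-contained and constructive (greedy) proof with explicit constants; what the cited probabilistic proof buys is a one-line reduction to a Chernoff bound that generalizes more readily to constrained codebooks (e.g.\ sparse or weighted versions used elsewhere in minimax lower bounds). Either is acceptable here.
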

		We construct $Z^{(i)}$ as follows.  For $1\leq j\leq m+\ell/2$ and $m+\ell+1\leq j\leq m+3\ell/2$, we set $Z^{(i)}_{j\cdot} = Z^{(0)}_{j\cdot}$.  That is, for all nodes in community 0 and the first half of nodes in communities 1 and 2, their community memberships match the baseline network.  For $1\leq j'\leq \ell/2$, set $Z^{(i)}_{m+\ell/2+j'\cdot}=(0, 1, 0)$ if $w^{(i)}_{j'}=0$, and set $Z^{(i)}_{m+\ell/2+j'\cdot}=(0, 0, 1)$ if $w^{(i)}_{j'}=1$.  For $\ell/2+1\leq j'\leq \ell$, set $Z^{(i)}_{m+\ell+j'\cdot}=(0, 0, 1)$ if $w^{(i)}_{j'}=0$, and set $Z^{(i)}_{m+\ell+j'\cdot}=(0, 1, 0)$ if $w^{(i)}_{j'}=1$.  That is, for the second half of community 1 and 2, the community membership matches the baseline network if the corresponding element in $w$ is $0$; otherwise 1 and 2 are switched.   Therefore, for any $i\neq j\in\{1,\ldots,N\}$, we have
		%****************
		\begin{align*}
		d_\inftt^2\left\{ P^{(i)}, P^{(j)} \right\}  &  \geq \|P^{(i)}_{1\cdot} - P^{(j)}_{1\cdot}\|_2^2 / n  = \phi^2\|w^{(i)}-w^{(j)}\|_H / n \geq \frac{\phi^2 \ell}{4n}    %\label{equation::lower_bound_d}
		\end{align*}
		We will choose $\phi$ small enough for a sufficiently large $n$ such that $\phi\leq 1/4$, so that every edge probability lies in $(1/2,3/4)$, which enables us to apply Proposition 4.2 from \cite{gao2014rate}.  Then noticing that for $i\neq j$, matrices $P^{(i)}$ and $P^{(j)}$ can only differ by $\phi$ in at most $2mn$ elements by definition, we have
		\begin{align*}
		D\left\{ P^{(i)} || P^{(j)} \right\}  &  \leq 8\|P^{(i)}-P^{(j)}\|_F^2  \leq 8\cdot 2 mn\cdot\phi^2 = 16mn\phi^2   %\label{equation::lower_bound_KL}
		\end{align*}
		where $D(f||g)$ denotes the Kullback-Leibler divergence between distributions $f$ and $g$.  
		
		Now we are ready to complete the proof.  By Lemma 3 of \cite{yu1997assouad}, we have
		\begin{align}
		\max_i E_{P^{(i)}} d_\inftt^2\left\{ \hat{P}, P^{(i)} \right\} &\geq \frac{\min_{i,j}d_\inftt^2\left\{ P^{(i)}, P^{(j)} \right\}}{4}\left[ 1-\frac{\max_{i,j}D\left\{ P^{(i)} || P^{(j)} \right\}+2}{\log |\mathcal{P}|} \right]^2\nonumber\\
		& \geq  \frac{\phi^2\ell}{16n}\left(1-\frac{16mn\phi^2+2}{\ell/8}  \right)^2 \geq \frac{\phi^2\ell}{16n}\left(\frac12-\frac{128mn\phi^2}{\ell}  \right)^2 \label{equation::lower_bound_finish}
		\end{align}
		The we maximize the right hand side of \eqref{equation::lower_bound_finish} by setting $\phi^2=\ell/(257mn)$ and $\ell\geq \max\{n/3, 32\}$.  Then
		$$
		\frac{\phi^2\ell}{16n}\left(\frac{1}{2}-\frac{128mn\phi^2}{\ell}  \right)^2 = \frac{\ell^2}{8224mn^2}\geq \frac{1}{74016m}
		$$
		The matrices ${\cal P}=\{P^{(1)},\ldots,P^{(N)}\}$ need to corresopnd to stochastic block models in our piece-wise bi-Lipschitz graphon space $F_{\delta, L}$ such that $\delta\Big/ (\log n/n)^{1/2}\to\infty$.  That is, the size of the smallest block must grow to infinity faster than $(n\log n)^{1/2}$.  Therefore,
		\begin{equation*}
		\max_{f\in F_{\delta, L}: \delta/ (\log n/n)^{1/2}\to\infty} 1/m = (n\log n)^{-1/2}
		\end{equation*}
		This completes the proof.}
\end{proof}

\bibliographystyle{biometrika}
\bibliography{manuscript_3}

\end{document}